\documentclass{article}
\usepackage{iclr2027_conference,times}

\usepackage{amsmath,amsfonts,bm}

\def\eqref#1{equation~\ref{#1}}

\def\1{\bm{1}}

\DeclareMathAlphabet{\mathsfit}{\encodingdefault}{\sfdefault}{m}{sl}
\SetMathAlphabet{\mathsfit}{bold}{\encodingdefault}{\sfdefault}{bx}{n}

\usepackage{hyperref}
\usepackage{url}
\usepackage{amssymb}
\usepackage{mathtools}
\usepackage{amsthm}
\usepackage{graphicx}
\usepackage{booktabs}
\usepackage{multirow}
\usepackage{makecell}
\usepackage{array}
\usepackage{enumitem}
\usepackage{nicefrac}
\usepackage{pifont}
\usepackage{microtype}
\usepackage{subcaption}
\usepackage{wrapfig}
\usepackage{adjustbox}
\usepackage{bbm}
\usepackage{tabularx}
\usepackage[table]{xcolor}
\usepackage{nicematrix}
\usepackage{afterpage}

\title{
Denoising Surface: Modeling and Predicting Inference Cost for Diffusion LLM Serving
}

\author{%
\textbf{Haoyu Zheng\textsuperscript{1}}
\quad
\textbf{Fangcheng Fu\textsuperscript{2}}
\quad
\textbf{Binhang Yuan\textsuperscript{3}}
\quad
\textbf{Yongqiang Zhang\textsuperscript{4}}
\quad
\textbf{Liang Deng\textsuperscript{4}}
\\[0.18em]
\textbf{Hao Wang\textsuperscript{1}}
\quad
\textbf{Yuanyuan Zhu\textsuperscript{1}}
\quad
\textbf{Xiao Yan\textsuperscript{1,*}}
\quad
\textbf{Jiawei Jiang\textsuperscript{1,5,*}}
\\[0.4em]
\textsuperscript{1}Wuhan University
\qquad
\textsuperscript{2}Shanghai Jiao Tong University
\\[0.05em]
\textsuperscript{3}The Hong Kong University of Science and Technology
\qquad
\textsuperscript{4}Damen Database Co., Ltd.
\\[0.05em]
\textsuperscript{5}Central China Normal University
\\[0.2em]
\textsuperscript{*}Corresponding authors
}

\iclrfinalcopy

\begin{document}

\newcommand{\stitle}[1]{\vspace{0em}\noindent{\bf #1\/}}

\newenvironment{myquote}
  {\begin{list}{}{\leftmargin=18pt\rightmargin=16pt}
   \item\relax\itshape}
  {\end{list}}

\newcommand{\squishlist}{
    \begin{list}{$\bullet$}
    {
        \setlength{\itemsep}{1pt}
        \setlength{\parsep}{1pt}
        \setlength{\topsep}{2.5pt}
        \setlength{\partopsep}{0.5pt}
        \setlength{\leftmargin}{1em}
        \setlength{\labelwidth}{1em}
        \setlength{\labelsep}{0.6em}
    }
}

\newcommand{\squishend}{
    \end{list}
}

\newcommand{\squishenumatelist}{
  \begin{enumerate}[
    itemsep=1pt,
    parsep=1pt,
    topsep=2.5pt,
    partopsep=0.5pt,
    leftmargin=1.5em,
    labelwidth=1em,
    labelsep=0.6em
  ]
}

\newcommand{\squishenumatend}{
  \end{enumerate}
}

\newtheorem{theorem}{Theorem}
\newtheorem{lemma}{Lemma}
\newtheorem{proposition}{Proposition}
\newtheorem{corollary}{Corollary}
\newtheorem{assumption}{Assumption}

\theoremstyle{remark}
\newtheorem{remark}{Remark}

\setlength{\fboxsep}{0.5pt}

\maketitle

\fancyhead{}
\renewcommand{\headrulewidth}{0pt}

        \vspace{-12pt}
\begin{abstract}
        \vspace{-6pt}

As diffusion large language models (dLLMs) become more capable, they are moving from research settings to real-world \textit{serving}, where request management (such as scheduling and resource allocation) relies on accurate estimation of per-request inference cost.
However, common cost proxies fall short for dLLMs: output length ignores that one forward pass can unmask multiple tokens, and denoising-step count ignores the \textit{heterogeneous} per-step costs.
We observe that the block-autoregressive generation mechanism induces a two-dimensional execution structure over output blocks and within-block denoising steps, whereas these proxies collapse it into a scalar, discarding information essential for characterizing the cost.
Motivated by this insight, we propose the Denoising Workload Surface (DWS), which preserves this two-dimensional block-step structure as a probability surface to weight the heterogeneous per-step costs.
We then design a coarse-to-fine training scheme that enables a lightweight prompt-only predictor to accurately predict the complex DWS.
This predictor runs efficiently even on a single CPU core, avoiding GPU contention with the serving model.
Since DWS decouples request-dependent execution behavior from deployment-specific cost factors, the predictor transfers across hardware configurations without retraining.
In \textit{real-world} serving experiments, DWS reduces cost-prediction error by up to $2.50\times$ over scalar-based predictors, while the DWS-guided shortest-job-first scheduler reduces end-to-end latency by up to $1.92\times$ for online chatbots.
\looseness=-1 %
\end{abstract}

        \vspace{-10pt}
\section{Introduction}
        \vspace{-6pt}
\label{sec:intro}

Diffusion Large Language Models (dLLMs)~\citep{nie2026large,gong2025scaling,li2026refusion} have attracted growing attention in recent years for their parallel decoding mechanism. 
Unlike autoregressive (AR) LLMs, which iteratively predict the next token and generate only \textit{one} token per decoding forward pass, dLLMs can jointly predict multiple masked positions and therefore unmask \textit{multiple} tokens within a single forward pass (i.e., a \textit{denoising step})~\citep{wu2026fast,wang2026diffusion}.
Importantly, recent dLLMs are competitive with small- and mid-scale autoregressive LLMs across various tasks~\citep{nie2026large,gong2025scaling,bie2025llada2}. 
Together with parallel decoding, these developments make dLLMs a promising alternative to AR LLMs for efficient serving~\citep{wang2026diffusion}.
Recently, popular LLM serving frameworks such as SGLang and vLLM have begun to support dLLM inference, signaling a shift from research settings to \textit{real-world serving}~\citep{sglang2025dllm,vllm2026diffusiongemma}.
\looseness=-1 %

Efficient LLM serving relies on effective request management, such as request scheduling, load balancing, and resource allocation.
For example, prior work on AR LLM serving commonly adopts shortest-job-first (SJF) scheduling to prevent long requests from blocking short ones, thereby reducing average latency and time-to-first-token (TTFT)~\citep{ssjf-aiops2024qiu,dont-stop-shahoutdon,zheng2026scheduling}.
Such request management relies on estimating how much computation each request will consume, making accurate cost estimation an important signal.
This raises our first question:
 
\vspace{-8pt}
\begin{myquote}
\textit{\textbf{Q1 (cost modeling)}: How can we accurately quantify the inference cost of a dLLM request?}
\vspace{-8pt}
\end{myquote}

\begin{figure}[t]
    \begin{subfigure}[b]{0.29\textwidth}
        \centering
        \includegraphics[width=\linewidth]{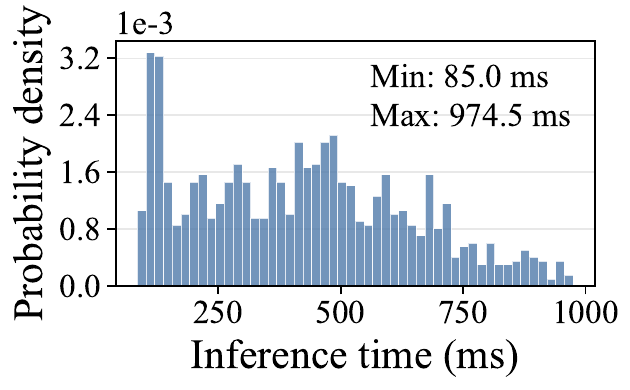}
        \vspace{-18pt}
        \caption{Requests with the same I/O lengths exhibit various costs.}
        \label{subfig:intro-a}
    \end{subfigure}
    \hfill
    \begin{subfigure}[b]{0.27\textwidth}
        \centering
        \includegraphics[width=\linewidth]{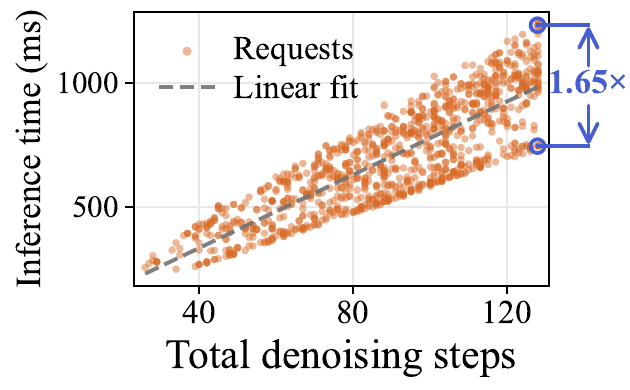}
        \vspace{-18pt}
        \caption{Denoising steps correlate with cost, but still inaccurate.}
        \label{subfig:intro-b}
    \end{subfigure}
    \hfill
    \begin{subfigure}[b]{0.4\textwidth}
        \centering
        \includegraphics[width=\linewidth]{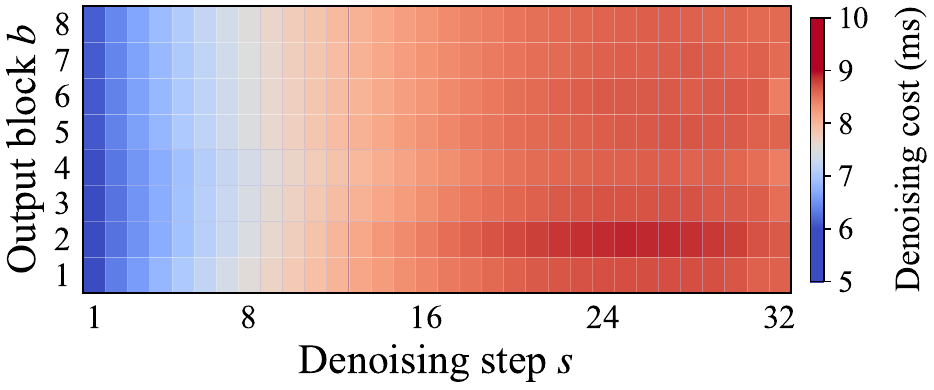}
        \vspace{-18pt}
        \caption{Cost of each denoising step varies across its positions, rather than being homogeneous.}
        \label{subfig:intro-c}
    \end{subfigure}
    
        \vspace{-6pt}
    \caption{\textbf{Motivation: common scalar proxies cannot faithfully characterize dLLM inference cost.} Setting: LLaDA2.0-mini serves \textit{equal-length} prompts \textit{individually}.}
        \vspace{-12pt}
    \label{fig:intro}
\end{figure}

\stitle{Output length?}
The question \textit{Q1} is well studied for AR LLMs, where each generated token corresponds to one decoding forward pass, making output length a natural proxy for request workload.
Accordingly, the SJF-based studies discussed above predict output length to estimate request cost.
However, this simple abstraction fundamentally \textbf{breaks down} for dLLMs, where a single denoising step can unmask a \textit{variable} number of tokens.
To empirically demonstrate this breakdown, we use LLaDA2.0-mini~\citep{bie2025llada2} and serve each request individually to avoid batching effects. 
As shown in Figure~\ref{subfig:intro-a}, even requests with the same input and output lengths exhibit substantially different (up to $11.47\times$) inference times, so output length is \textit{no longer a reliable proxy} for dLLM workload.
\looseness=-1 %

\stitle{Denoising steps?}
A natural alternative is the number of denoising steps, which directly counts the decoding forward passes. As shown in Figure~\ref{subfig:intro-b}, although it correlates with the inference time, requests with the same number of steps still vary considerably in inference cost. 
Figure~\ref{subfig:intro-c} reveals the reason: denoising steps at different positions can incur different computational costs\footnote{We find that this heterogeneity stems from multiple sources, including the model architecture and context length (detailed in Appendix~\ref{app:cost-heterogeneity}). The main text focuses on how to model and predict the resulting complex cost.}, especially across within-block step indices. However, simply counting the denoising steps treats \textit{heterogeneous} steps equally, making the total step count a reasonable but still \textit{inaccurate} proxy for dLLM workload.

\stitle{Our analysis and solution to Q1.}
Popular serving-oriented dLLMs typically generate in a block-autoregressive manner\footnote{We find that \textit{all} dLLMs currently natively supported by SGLang and vLLM follow this manner.}, where each output block undergoes multiple denoising steps (detailed in Section~\ref{sec:preliminaries}). This process naturally forms a \textbf{two-dimensional} execution structure indexed by the output block and the within-block denoising step. 
However, both output length and total denoising-step count compress this structure into a \textit{scalar}, discarding information essential for cost estimation. 
Motivated by this insight, we introduce the \textbf{D}enoising \textbf{W}orkload \textbf{S}urface (DWS), which preserves this structure by decomposing the denoising process into \textit{fine-grained} denoising cells. Then we weight the costs of each cell, thereby providing a more faithful characterization of dLLM inference cost.

\stitle{How to predict?}
A further challenge is to predict DWS before the request inference. 
A straightforward approach is to adapt existing AR-LLM workload predictors, which typically use a lightweight encoder (e.g., BERT) for prompt encoding, followed by a small regression head (e.g., an MLP) to predict the workload.
However, this design has two drawbacks for dLLM serving. 
\textit{First}, DWS is a complex two-dimensional target rather than a scalar, so a small regression head can struggle to learn it directly from the prompt.
\textit{Second}, although the encoder is lightweight compared with the serving model, it still requires GPU acceleration for efficient prediction, \textit{competing} with the serving model for GPU resources.
Moving it to CPUs can avoid GPU contention but incurs high prediction latency, delaying cost estimates and potentially leading to suboptimal serving decisions.
This motivates our second question:
\looseness=-1 %

\begin{myquote}
\vspace{-8pt}
\textit{\textbf{Q2 (cost prediction)}: How can we design a CPU-efficient predictor that enables accurate cost estimation for dLLM requests by capturing their structured inference workload?}
\vspace{-8pt}
\end{myquote}

\stitle{Our solution to Q2.}
We prove that DWS can be factorized along its two dimensions \textit{without} loss of expressiveness, and design a prompt-only predictor that separately predicts the corresponding distributions and combines them to \textit{reconstruct} the two-dimensional DWS.
Since learning the fine-grained DWS structure from prompts remains challenging, we further design a \textit{progressive} training scheme with \textit{multi-horizon} workload supervision to gradually guide the encoder from coarse workload patterns to the fine-grained DWS.
Finally, we carefully quantize the predictor to trade a \textit{negligible} loss in accuracy for substantially higher efficiency even deployed on a single CPU core.

Importantly, our formulation \textit{decouples} request-dependent execution behavior from deployment-specific cost factors. 
The former is captured by our accurate prompt-only DWS predictor, while the latter are automatically collected and refreshed through online profiling.
This separation allows the predictor to \textbf{transfer} across hardware deployments \textit{without} collecting new training data or retraining.

We evaluate both the predictor and its downstream serving performance. 
\textit{For the predictor}, DWS achieves up to $2.50\times$ lower cost MAE than common scalar-based predictors. 
Its quantization further improves the prediction throughput by $2.32\times$ with only $<1\%$ accuracy degradation.
\textit{For serving}, our DWS-guided SJF scheduler delivers substantial gains \textit{across} different scenarios, datasets, model scales, and hardware without retraining.
For online chatbots, it reduces end-to-end latency and TTFT by up to $1.92\times$ and $2.22\times$, respectively, over the default first-come-first-served (FCFS) scheduler. For offline data generation, it accelerates sample collection by up to $2.73\times$ over FCFS.
\looseness=-1 %

\stitle{Contributions.}
In summary, we make the following contributions:
{\setlength\topsep{0pt}\setlength\partopsep{1pt}
\begin{itemize}

    \item We conduct a fine-grained attribution of dLLM inference cost and reveal the heterogeneity of per-step costs. Based on these insights, we introduce DWS and a corresponding cost model that preserves the two-dimensional structure for \textit{accurate} estimation of inference cost.
    \looseness=-1 %
    \item We design a CPU-efficient prompt-only DWS predictor that accurately predicts the structured workload of dLLM inference. Moreover, our \textit{decoupled} design enables this predictor to transfer across deployments without collecting new training data or retraining.
    \item We show that DWS improves \textit{both} cost prediction accuracy and serving efficiency, enabling the DWS-guided scheduler to reduce online latency and accelerate offline data generation.
    \looseness=-1 %
\end{itemize}

        \vspace{-6pt}
\section{Preliminaries}
\label{sec:preliminaries}
        \vspace{-6pt}

In this section, we introduce the key mechanisms widely adopted by modern serving-oriented dLLMs.

\begin{wrapfigure}{r}{0.41\textwidth}
\vspace{-1.9em}
\centering
\includegraphics[width=\linewidth]{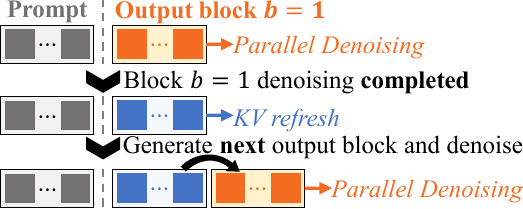}
\vspace{-1.8em}
\caption{Block-autoregressive generation with KV refresh.}
\label{fig:block_ar}
\vspace{-1.3em}
\end{wrapfigure}

\vspace{3pt}
\stitle{Block-autoregressive generation.}
Modern dLLMs commonly generate output blocks autoregressively~\citep{team2026diffusiongemma,bie2025llada2,song2025seed}, as shown in Figure~\ref{fig:block_ar}.
Within each output block, the model iteratively denoises masked tokens in parallel, where each denoising step performs one forward pass and reveals one or more tokens~\citep{arriola2025block}.
The block size $B$ is typically configured at deployment time and \emph{kept fixed} during inference, e.g., $B=32$ for LLaDA2.0.
Once a block is fully denoised, generation terminates if an \texttt{[EOS]} token has been produced; otherwise, a new masked block is appended and denoised.
Compared with early fixed-canvas diffusion, this design supports variable-length generation and avoids wasted computation on padded positions~\citep{arriola2025block}.
Thus, the generation workload substantially varies across requests, making cost-aware request scheduling beneficial.
\looseness=-1 %

\stitle{KV refresh.}
DLLMs can cache the key/value (KV) states of completed context to avoid redundant computation. However, bidirectional attention within a block keeps changing its KV states during denoising. Therefore, once a block is fully denoised, its \textit{own} KV states need to be refreshed and cached.
\looseness=-1 %

    \vspace{-6pt}
\section{Cost Modeling}
\label{sec:cost_modeling}
    \vspace{-6pt}

\subsection{Anatomy of dLLM Inference Cost}
\label{subsec:anatomy}
    \vspace{-6pt}

We begin with an empirical study of where dLLM inference time is actually spent.

\begin{wrapfigure}{r}{0.32\textwidth}
    \vspace{-18pt}
    \includegraphics[width=\linewidth]{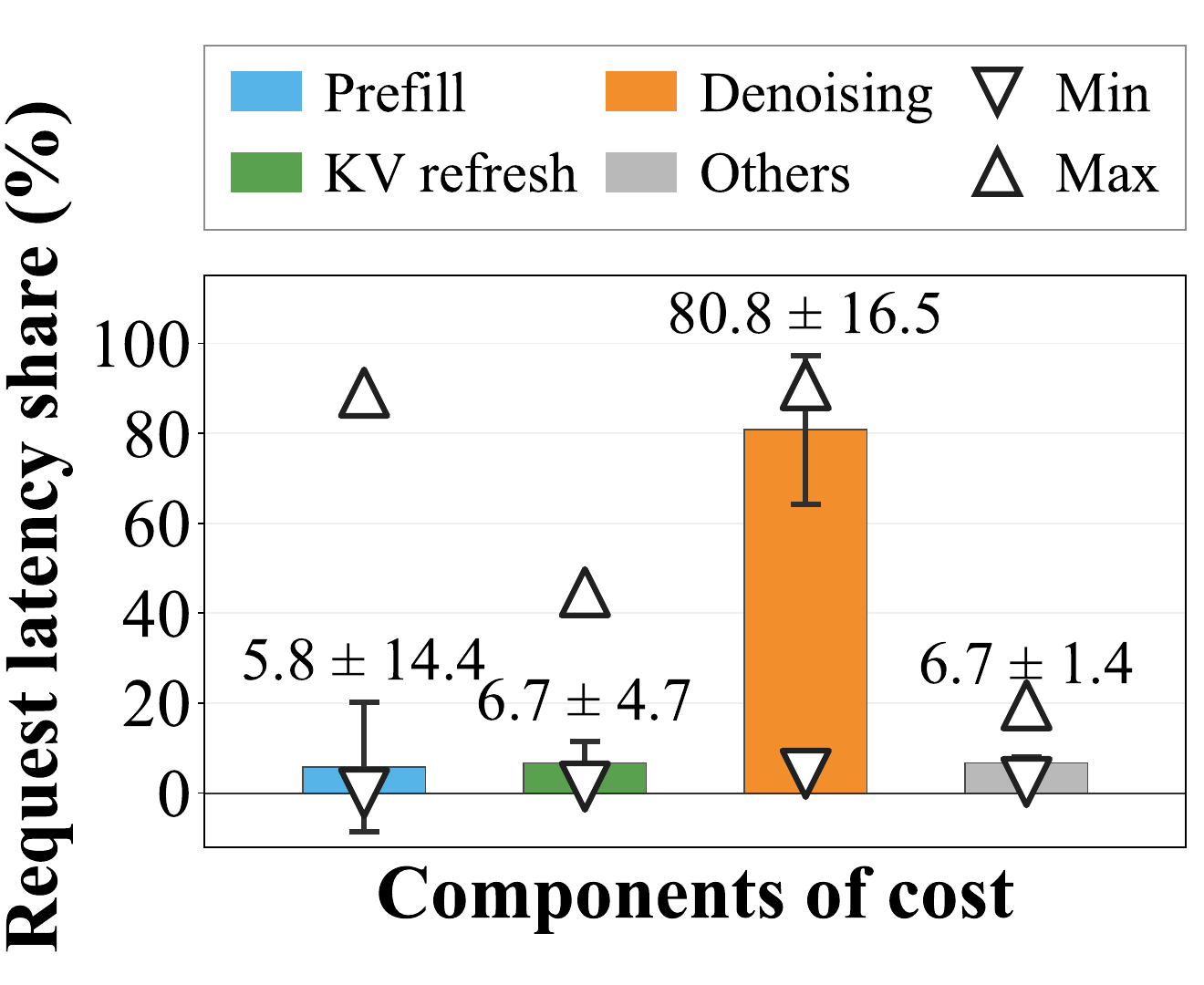}

    \vspace{-8pt}
    \caption{Per-request latency shares of inference stages (bars: mean $\pm$ std; markers: min/max).}
    \vspace{-18pt}
    \label{fig:component-breakdown}
\end{wrapfigure}
\stitle{Setting.}
We sample 20K requests from LMSYS-Chat-1M~\citep{zheng2023lmsys}, a real-world dataset.
Each is served by LLaDA2.0-mini \textit{individually} to isolate its \textit{own} cost from batching effects.
\looseness=-1 %

\stitle{Cost components.}
We analyze their latency breakdown to determine which stages to include in our cost model.
We retain a stage if it satisfies \textit{either} of two criteria: 
(i) it dominates the total latency; or (ii) it varies substantially across requests, and thus affects their relative ordering. 
Figure~\ref{fig:component-breakdown} summarizes the latency shares across these requests.
Overall, denoising dominates the total latency (avg: $80.8\%$), whereas prefill and KV refresh take up smaller average shares but vary substantially across requests ($5.8\pm14.4\%$ and $6.7\pm4.7\%$). 
In contrast, other stages are both small and stable, and thus have limited impact. 
Therefore, our cost model focuses on these three key components.
\looseness=-1 %

We first model the \textit{simpler} prefill and KV-refresh costs, deferring the \textit{complex} denoising to Section~\ref{subsec:denoising_cost}.

\stitle{Prefill and KV refresh.}
These two components follow simple execution patterns: prefill processes \textit{each} prompt block once, and KV refresh is performed once for \textit{each} newly completed output block.
These patterns naturally motivate \textit{linear} cost models in the prompt-block count $m$ and output-block count $n$, respectively.
As shown in Figure~\ref{fig:c_prefill_and_c_kv}, both linear models fit the measured costs \textit{closely} throughout the context range supported by LLaDA2.0 (up to 32K tokens, i.e., 1K blocks) with $R^2 \geq 0.999$ (detailed in Appendix~\ref{app:prefill-kv}).
We further study the higher-order terms introduced by attention computation under very long contexts in Appendix~\ref{app:long_context}.

\begin{wrapfigure}{r}{0.33\textwidth}
    \centering
        \vspace{-26pt}

    \begin{subfigure}[t]{\linewidth}
        \centering
        \includegraphics[width=\linewidth]{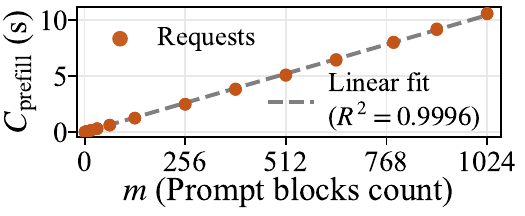}
        \vspace{-16pt}
        \caption{$C_{\mathrm{prefill}}$ scales linearly with $m$.}
        \label{subfig:c_prefill_vs_m}
    \end{subfigure}
    \begin{subfigure}[t]{\linewidth}
        \centering
        \vspace{6pt}
        \includegraphics[width=\linewidth]{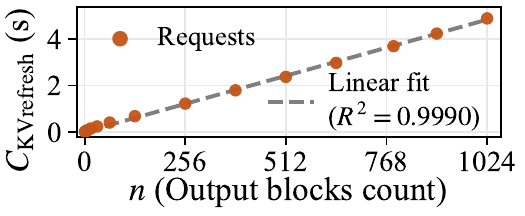}
        \vspace{-16pt}
        \caption{$C_{\mathrm{KVrefresh}}$ scales linearly with $n$.}
        \label{subfig:c_kv_vs_n}
    \end{subfigure}

    \vspace{-4pt}
    \caption{Linear fits for prefill and KV-refresh costs.}
    \label{fig:c_prefill_and_c_kv}
        \vspace{-26pt}
\end{wrapfigure}
We therefore model these two components as $C_{\text{prefill}} \triangleq \alpha m$ and $C_{\text{KVrefresh}} \triangleq \beta n$, so the total inference cost can be modeled as:
\begin{equation}
C \triangleq \alpha m + \beta n + C_{\text{denoising}}.
\label{eq:total-cost}
\end{equation}
The term $C_{\text{denoising}}$ dominates the total cost and is considerably harder to model, as we address below.

        \vspace{-3pt}
\subsection{Modeling the denoising cost}
        \vspace{-3pt}
\label{subsec:denoising_cost}

Recall that Figure~\ref{subfig:intro-c} shows that denoising steps at different execution positions differ in cost.
Thus, accurately modeling $C_{\mathrm{denoising}}$ requires preserving its \textit{fine-grained} denoising structure.
\looseness=-1 %

\stitle{A structured view of denoising.}
In dLLM inference, the number of output blocks $n$ is bounded by the configured maximum output length, so $n \in \{1,\cdots,N_{\max}\}$. 
Let $s_b$ denote the number of denoising steps required by the $b$-th output block $b$.
Since each step unmasks at least one token, $s_b \in \{1,\cdots,B\}$, where $B$ denotes the block size.
Writing $c_{b,s}^\star$ for the \textit{actual} cost of the denoising step $s$ in output block $b$, the \textit{actual} denoising cost of a request can be decomposed as
$C_{\mathrm{denoising}}^\star = \sum_{b=1}^{n} \sum_{s=1}^{s_b} c_{b,s}^\star$.

        \vspace{-3pt}
Here, although $n$ and $s_b$ vary across requests, their upper bounds ($N_{\max}$ and $B$, respectively) are fixed by the serving configuration. 
We therefore embed the denoising execution of each request into a fixed-size \textit{denoising activity matrix} $A \in\{0,1\}^{N_{\max}\times B}$, where $A_{b,s}=1$ iff the output \textit{reaches} block $b$ and performs \textit{at least} $s$ denoising steps within it, i.e., $A_{b,s}=\mathbbm{1}\{b \le n \land s \le s_b\}$.
This yields
    \vspace{-5pt}
\begin{equation}
C_{\mathrm{denoising}}^\star = \sum_{b=1}^{N_{\max}} \sum_{s=1}^{B} c_{b,s}^\star \cdot A_{b,s},
    \vspace{-2pt}
\label{eq:dws}
\end{equation}
which clearly separates these two \textit{orthogonal} factors: (i) the denoising activity carried by $A_{b,s}$, and (ii) the cost of each step carried by $c_{b,s}^\star$. 
Next, we turn to the cost of a single denoising step.

        \vspace{-2pt}
\stitle{Modeling cell costs.}
We first examine how the cost of the same denoising cell varies across requests.
Across the 20K traces detailed above, the 99th percentile (P99) coefficient of variation (CV) of $c_{b,s}^\star$ is only $11.08\%$, indicating that cell costs are fairly \textit{stable} across requests.
This suggests that a \textit{shared} cell-cost matrix is a reasonable starting point.
We further account for the residual variation induced by request-dependent factors.
Mechanically, a denoising step at block $b$ attends to the KV states of the preceding context, whose length is approximately $m+b-1$ blocks, where $m$ varies across requests.
We therefore decompose the effects of the execution position $(b,s)$ and the prompt length $m$:
\looseness=-1 %
    \vspace{-2pt}
\begin{equation}
K_{b,s}(m)
\triangleq
W_{b,s} + \delta (m-\bar m),
    \vspace{-2pt}
\end{equation}
where $K(m)\in\mathbb{R}^{N_{\max}\times B}$ denotes the modeled cost of all cells for a request with $m$ prompt blocks, and $W\in\mathbb{R}^{N_{\max}\times B}$ is the request-shared baseline cost matrix.
For the effects of context, the output-side context growth is already indexed by $b$ and thus absorbed into $W_{b,s}$, leaving only the prompt-side dependence, which is modeled by $\delta(m-\bar m)$, a request-dependent prompt-length correction.

We jointly calibrate $W$ and $\delta$ from request-level denoising latency (detailed in Appendix~\ref{app:profiling}).
Across 5-fold validation on the 20K real traces, this model achieves $R^2=0.9956$ and MAPE $=5.01\%$, showing an accurate approximation of denoising cost.
We further study its residual in Appendix~\ref{app:denoising_long_context}.

\stitle{Putting it together.} Substituting $K_{b,s}(m)$ into Equation~\ref{eq:dws}, the denoising cost can be modeled as:
    \vspace{-3pt}
\begin{equation}
C_{\mathrm{denoising}} \triangleq \sum_{b=1}^{N_{\max}} \sum_{s=1}^{B}
K_{b,s}(m) \cdot A_{b,s} = \langle K(m), A \rangle.
\label{eq:denoising_final}
        \vspace{-3pt}
\end{equation}
Moreover, the output blocks $n$ is determined by $A$ as the number of its active blocks, so $n=\sum_{b=1}^{N_{\max}}A_{b,1}$.
Finally, combining the prefill, KV-refresh, and denoising components, the total cost is:
        \vspace{-4pt}
\begin{equation}
C=\alpha m+\beta\sum_{b=1}^{N_{\max}}A_{b,1}+\langle K(m),A\rangle.
\label{eq:cost_final}
        \vspace{-3pt}
\end{equation}
Here, $m$ is directly observed from the prompt, while $\alpha$, $\beta$, and $K(\cdot)$ depend on the deployment configuration and can be obtained by profiling.
Next, we turn to the \textit{only} remaining unknown term $A$.
\looseness=-1 %

        \vspace{-6pt}
\section{Prediction of the denoising activity}
\label{sec:cost_prediction}
        \vspace{-8pt}

\subsection{Prediction Target}
\label{subsec:prediction_target}
        \vspace{-4pt}
Equation~\ref{eq:cost_final} \textit{naturally suggests} the prediction target, as derived below. First, the expected cost of $x$ is:
        \vspace{-6pt}
\begin{equation}
\mathbb{E}[C\mid x] = \alpha m(x) + \beta\sum_{b=1}^{N_{\max}}D_{b,1}(x) + \langle K(m(x)),D(x)\rangle,
\label{eq:expected_cost}
        \vspace{-8pt}
\end{equation}
where $D(x)\triangleq\mathbb{E}[A\mid x]\in[0,1]^{N_{\max}\times B}$ is the only unknown term in $\mathbb{E}[C\mid x]$, which is \textit{clearly decoupled} from the deployment-specific cost factors $\alpha$, $\beta$, and $K(\cdot)$. 
Next, we focus on $D(x)$.

Since $A_{b,s}$ is binary, $D_{b,s}(x)=P(A_{b,s}=1\mid x)$ is the activity probability of denoising cell $(b,s)$.
Geometrically, $D(x)$ forms a \textit{surface} over the block-step space, whose height at $(b,s)$ is $D_{b,s}(x)$, with an example shown on the right of Figure~\ref{fig:dws_predictor}.
We call $D(x)$ the \textbf{D}enoising \textbf{W}orkload \textbf{S}urface (\textbf{DWS}).
\looseness=-1 %

\stitle{Structure-preserving factorization.}
The decoding process imposes a \textit{survival structure}: $A_{b,s+1}\le A_{b,s}$ and $A_{b+1,1}\le A_{b,1}$ hold point-wise.
DWS inherits these constraints, and so \textit{should} its prediction.
However, directly predicting each cell independently may violate these constraints and produce structurally invalid activity patterns.
A straightforward fix is to explicitly enforce them on the predictor outputs, but the predictor itself still does not capture the underlying structure.

Instead, we build this decoding process into the parameterization, making the constraints hold \textit{by construction}.
Specifically, cell $(b,s)$ is active \textit{if and only if} decoding reaches block $b$ and this block runs at least $s$ steps, i.e., $\{A_{b,s}=1\}=\{n\ge b\}\cap\{s_b\ge s\}$.
By the chain rule, DWS factorizes exactly as
\looseness=-1 %
        \vspace{-2pt}
\begin{equation}
D_{b,s}(x) = P(A_{b,s}=1\mid x)
=
P(n\ge b\mid x)
\cdot
P(s_b\ge s\mid n\ge b,x),
\label{eq:dws_factorization}
        \vspace{-3pt}
\end{equation}
where both factors are survival probabilities and are thus non-increasing, satisfying these constraints \textit{naturally}.
We further prove that this factorization loses \textit{no} expressiveness (Appendix~\ref{app:factorized}).
Thus, DWS prediction decomposes into predicting two families of \textit{unconstrained} distributions, as described below.
\looseness=-1 %

\begin{figure}[t]
  \centering
    \centering
    \includegraphics[width=\textwidth]{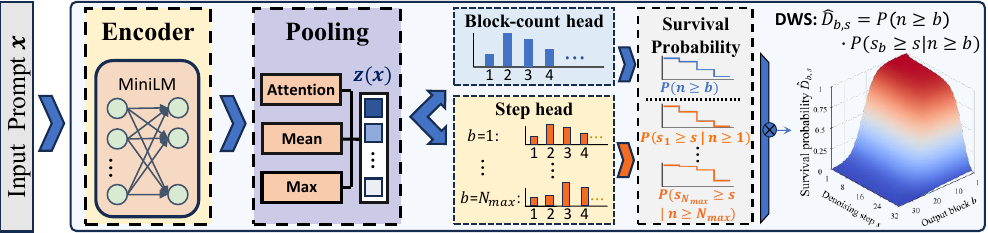}
        \vspace{-14pt}
    \caption{\textbf{Architecture of the DWS predictor.} An encoder extracts the representations of prompt $x$ and aggregates into $z(x)$, followed by two separate heads for the output-block and denoising-step distributions. Then the corresponding survival probabilities are combined to reconstruct the DWS $\widehat{D}$. DWS shown on the right is a smoothed visualization of $\widehat{D}$ for the first prompt in LMSYS-Chat-1M.}
        \vspace{-12pt}
    \label{fig:dws_predictor}
\end{figure}

        \vspace{-6pt}
\subsection{DWS Predictor}
        \vspace{-6pt}
\label{subsec:dws_predictor}

Following Equation~\ref{eq:dws_factorization}, we predict the two survival factors separately.
As shown in Figure~\ref{fig:dws_predictor}, given a prompt $x$, we use a pretrained all-MiniLM-L12-v2~\citep{wang2020minilm} encoder to encode its semantics and pool the token embeddings into a representation $\mathbf{z}(x)$.
On top of $\mathbf{z}(x)$, a block-count head predicts $p_k^{\mathrm{block}}(x) \triangleq P(n=k\mid x)$ for
$k\in\{1,\ldots,N_{\max}\}$, while a step head predicts
$p_{b,t}^{\mathrm{step}}(x) \triangleq P(s_b=t\mid n\ge b,x)$ for each potential
block $b$ and step $t\in\{1,\ldots,B\}$.
Given these distributions, we convert them into their corresponding survival probabilities and then reconstruct the predicted DWS as
\looseness=-1 %
        \vspace{-4pt}
\begin{equation}
\widehat D_{b,s}(x) =
\left(\sum_{k=b}^{N_{\max}} p_k^{\mathrm{block}}(x)\right) \cdot
\left(\sum_{t=s}^{B} p_{b,t}^{\mathrm{step}}(x)\right).
\label{eq:dws_predictor}
        \vspace{-10pt}
\end{equation}

The next question is how to train this predictor, facing two challenges: 
(i) the pretrained encoder is optimized for \textit{generic} semantic representations, whereas we ask it to represent the denoising process;
and (ii) DWS is a two-dimensional structured target and is inherently \textit{harder} to learn than a scalar workload.
We therefore adopt a \textit{two-stage} training scheme with \textit{multi-horizon} workload supervision.

\vspace{-3pt}
\stitle{Coarse-to-fine two-stage training.}
\textit{In Stage~A}, we introduce multi-horizon workload supervision to adapt the encoder.
Given a step horizon $h$, we truncate the denoising steps of every block at $h$, yielding the truncated workload $T^{(h)} \triangleq \sum_{b=1}^{n} \min(s_b, h) = \sum_{b=1}^{N_{max}} \sum_{s=1}^{h} A_{b,s}$. 
At $h=1$, $T^{(1)} = n$ focuses on the number of output blocks, while increasing $h$ \textit{progressively} exposes deeper within-block denoising behavior, until $T^{(B)}$ recovers the total denoising-step count.
To illustrate the benefits, consider two denoising traces with the same total step count: $\tau_1$ generates more blocks but requires fewer steps per block, whereas $\tau_2$ does the opposite. 
At a small $h$, $T^{(h)}(\tau_1) > T^{(h)}(\tau_2)$, since the deep denoising steps of $\tau_2$ are truncated.
As $h$ grows, these deeper steps are gradually exposed, and $T^{(h)}(\tau_2)$ approaches $T^{(h)}(\tau_1)$.
Supervising the encoder with $h \in \{1, 4, 8, 12, \ldots, B\}$ thus yields a multi-scale signal that a single scalar target cannot provide, teaching it to represent the \textit{coarse} denoising structure, which Stage~B then resolves into the fine-grained DWS.

\textit{In Stage~B}, we train the block-count head and the denoising-step head on top of the adapted encoder by directly supervising the reconstructed DWS, the final workload target, rather than the two heads separately.
Specifically, we minimize the Brier loss $\mathcal{L}_B = \tfrac{1}{N_{\max}B}\sum_{b,s}(\widehat D_{b,s}(x) - A_{b,s})^2$.
Minimizing this objective \textit{provably} controls the cost-estimation error under \textit{all} deployments simultaneously (detailed in Proposition~\ref{pro:error} in Section~\ref{sec:discussion}).
Further training details are provided in Appendix~\ref{app:predictor_training} and our code.
\looseness=-1 %

        \vspace{-2pt}
\stitle{Quantization.}
To run the predictor efficiently on CPUs, we quantize it to INT8, achieving $2.32\times$ speedup on a \textit{single CPU core}, with only a $0.54\%$ degradation in total-cost MAE (Appendix~\ref{app:predictor_quantization}).

        \vspace{-2pt}
\stitle{Pipeline.}
Upon request arrival, the CPU-side predictor takes its prompt $x$ and predicts its DWS $\widehat D(x)$, while $m(x)$ is obtained directly from the prompt length.
Equation~\ref{eq:expected_cost} then combines them with the profiled cost factors $\alpha$, $\beta$, and $K(\cdot)$, yielding the predicted inference cost $\widehat C(x)$.
\looseness=-1 %

        \vspace{-6pt}
\section{Discussion}
\label{sec:discussion}
        \vspace{-6pt}

\stitle{Why do dLLMs need DWS?}
We divide the answer into three aspects.
\textit{First}, DWS is the right prediction target. By Equation~\ref{eq:expected_cost}, given the deployment-specific cost factors, $D(x)$ and the observed $m(x)$ can model the expected cost of \textit{any} prompt $x$ under \textit{any} hardware deployment. 
In contrast, scalar targets are essentially \textit{lossy} compressions of DWS:
(i) output length keeps only $\mathbb{E}[n \mid x] = \sum_b D_{b,1}(x)$, losing the denoising structure in each block;
(ii) total steps keeps only $\sum_{b,s} D_{b,s}(x)$, losing the heterogeneous cost of each step;
(iii) direct cost relies on its training deployment and must be retrained once the cost factors change.
Appendix~\ref{app:proxy_bias} further theoretically quantifies their bias.
Moreover, a single DWS predictor can control the cost-estimation error for \textit{all} deployments (proved in Appendix~\ref{app:theory_proof}):
\looseness=-1 %
\begin{proposition}[Error control]
\label{pro:error}
For any predictor $\widehat D$ and deployment $\theta$, the predicted cost satisfies
        \vspace{-4pt}
\begin{equation}
\mathbb{E}_x\big|\widehat C_\theta(x)-\mathbb{E}[C_\theta\mid x]\big|\le\kappa_\theta\sqrt{N_{\max}B\,\big(\mathcal{R}(\widehat D)-\mathcal{R}(D)\big)},
\label{eq:uniform_bound}
        \vspace{-6pt}
\end{equation}
where $\mathcal{R} \triangleq \mathbb{E}[\mathcal{L}_B]$ is the Brier risk and $\kappa_\theta$ depends only on the profiled cost factors.
\end{proposition}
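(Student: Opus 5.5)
The plan is to express the cost error as a linear functional of the surface error, bound that functional by Cauchy--Schwarz, and then identify the mean squared surface error with the excess Brier risk.

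\emph{Step 1: the error is linear in the surface error.} For deployment $\theta$ with cost factors $(\alpha_\theta,\beta_\theta,K_\theta(\cdot))$, the predicted cost $\widehat C_\theta(x)$ is obtained from Equation~\ref{eq:expected_cost} by replacing $D(x)$ with $\widehat D(x)$. The prefill term $\alpha_\theta m(x)$ is identical in both expressions and cancels. With $\Delta(x)\triangleq\widehat D(x)-D(x)$ we therefore get
\begin{equation*}
\widehat C_\theta(x)-\mathbb{E}[C_\theta\mid x]=\langle G_\theta(m(x)),\Delta(x)\rangle,
\end{equation*}
where $G_\theta(m)\triangleq K_\theta(m)+\beta_\theta E$ and $E$ is the indicator matrix of the first column $s=1$.

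\emph{Step 2: Cauchy--Schwarz.} Applying Cauchy--Schwarz in $\mathbb{R}^{N_{\max}\times B}$ gives
\begin{equation*}
|\langle G_\theta,\Delta\rangle|\le\|G_\theta(m)\|_F\,\|\Delta(x)\|_F\le\sqrt{N_{\max}B}\,\|G_\theta\|_\infty\|\Delta(x)\|_F,
\end{equation*}
using $\|G\|_F\le\sqrt{N_{\max}B}\,\|G\|_\infty$. Set
\begin{equation*}
\kappa_\theta\triangleq\sup_{m}\max_{b,s}|G_{\theta,b,s}(m)|\le\max_{b,s}|W_{b,s}|+|\delta|\,\sup_m|m-\bar m|+|\beta|.
\end{equation*}
This is finite because $m$ is bounded by the maximum context length, and it depends only on the profiled factors, as the statement requires. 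Taking $\mathbb{E}_x$ and applying Jensen, $\mathbb{E}\|\Delta\|_F\le\sqrt{\mathbb{E}\|\Delta\|_F^2}$, we obtain
\begin{equation*}
\mathbb{E}_x|\cdot|\le\kappa_\theta\sqrt{N_{\max}B}\,\sqrt{\mathbb{E}_x\|\Delta(x)\|_F^2}.
\end{equation*}

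\emph{Step 3: the Brier bias--variance identity.} This step links the surface error to the training objective. Conditional on $x$, expand $(\widehat D_{b,s}-A_{b,s})^2$ as $(\widehat D_{b,s}-D_{b,s})+(D_{b,s}-A_{b,s})$ squared. The cross term vanishes in conditional expectation because $D_{b,s}(x)=\mathbb{E}[A_{b,s}\mid x]$ and $\widehat D(x)$ is $x$-measurable. Averaging over cells and over $x$ gives
\begin{equation*}
\mathcal{R}(\widehat D)-\mathcal{R}(D)=\frac{1}{N_{\max}B}\,\mathbb{E}_x\|\Delta(x)\|_F^2.
\end{equation*}
Hence $\mathbb{E}_x\|\Delta\|_F^2=N_{\max}B\,(\mathcal{R}(\widehat D)-\mathcal{R}(D))$.

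\emph{Step 4: combine and normalize.} Substituting Step 3 into Step 2 yields
\begin{equation*}
\kappa_\theta\,N_{\max}B\,\sqrt{\mathcal{R}(\widehat D)-\mathcal{R}(D)}.
\end{equation*}
This carries one more factor of $\sqrt{N_{\max}B}$ than the bound claimed in Equation~\ref{eq:uniform_bound}, so the extra factor has to be absorbed into $\kappa_\theta$. The cleanest way is to avoid the crude $\|G\|_F\le\sqrt{N_{\max}B}\,\|G\|_\infty$ step and define $\kappa_\theta\triangleq\sup_m\|G_\theta(m)\|_F/\sqrt{N_{\max}B}$, the root-mean-square per-cell cost. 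With this choice Step 2 reads $\mathbb{E}_x|\cdot|\le\kappa_\theta\sqrt{N_{\max}B}\,\sqrt{\mathbb{E}_x\|\Delta\|_F^2}/\sqrt{N_{\max}B}\cdot\sqrt{N_{\max}B}$, and Step 3 then gives exactly $\kappa_\theta\sqrt{N_{\max}B\,(\mathcal{R}(\widehat D)-\mathcal{R}(D))}$. This $\kappa_\theta$ still depends only on the profiled factors and is bounded by the max-norm expression above.

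The main obstacle is exactly this normalization bookkeeping, namely choosing $\kappa_\theta$ so that the dimensional factor matches the statement. A secondary point is justifying $\sup_m$ via bounded context. Steps 1 and 3 are routine.
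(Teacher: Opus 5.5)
Your Steps 1--3 follow the paper's argument exactly. The prefill term cancels, $\beta$ is absorbed into the first column (your $G_\theta$ is the paper's $\widetilde K_\theta$), Cauchy--Schwarz plus Jensen reduce everything to $\sqrt{\mathbb{E}_x\|\Delta(x)\|_F^2}$, and the Brier decomposition gives $\mathbb{E}_x\|\Delta(x)\|_F^2=N_{\max}B\,(\mathcal{R}(\widehat D)-\mathcal{R}(D))$.

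The gap is in Step~4: your normalization runs in the wrong direction. With $\kappa_\theta\triangleq\sup_m\|G_\theta(m)\|_F/\sqrt{N_{\max}B}$ you only have $\|G_\theta(m)\|_F\le\sqrt{N_{\max}B}\,\kappa_\theta$. That is the same inequality you had with the max norm, so nothing new cancels. Your displayed ``Step 2 reads'' line simplifies to $\kappa_\theta\sqrt{N_{\max}B}\,\sqrt{\mathbb{E}_x\|\Delta(x)\|_F^2}$, and Step~3 then yields $\kappa_\theta N_{\max}B\sqrt{\mathcal{R}(\widehat D)-\mathcal{R}(D)}$, not the claimed bound. With your RMS constant the claimed inequality is in fact false. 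Take a uniform cell cost $G_\theta\equiv c\mathbf{1}$ (for example $\beta=\delta=0$, $W=c\mathbf{1}$) and $\widehat D=D-\epsilon\mathbf{1}$, which is admissible when $D\ge\epsilon$ entrywise. Then:
\begin{itemize}
\item the left side is $c\epsilon N_{\max}B$;
\item the excess risk is $\epsilon^2$ and your $\kappa_\theta$ equals $c$;
\item so your right side is only $c\epsilon\sqrt{N_{\max}B}$.
\end{itemize}

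To absorb the extra $\sqrt{N_{\max}B}$ you must multiply, not divide. Define $\kappa_\theta\triangleq\sup_m\|G_\theta(m)\|_F$, the unnormalized Frobenius norm. This is $\sqrt{N_{\max}B}$ times your RMS cost and at most $\sqrt{N_{\max}B}$ times your max-norm constant. Then apply Cauchy--Schwarz directly, without passing through $\|\cdot\|_\infty$:
\[
\mathbb{E}_x\big|\widehat C_\theta(x)-\mathbb{E}[C_\theta\mid x]\big|\le\kappa_\theta\sqrt{\mathbb{E}_x\|\Delta(x)\|_F^2}=\kappa_\theta\sqrt{N_{\max}B\,(\mathcal{R}(\widehat D)-\mathcal{R}(D))}.
\]
This is precisely the paper's choice of constant and its proof. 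Your observation that the supremum is finite, because $K_\theta(m)$ is affine in $m$ and $m$ is bounded by the context limit, is a useful detail the paper leaves implicit.
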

        \vspace{-6pt}
This bound depends on the predictor \textit{only} through $\mathcal{R}(\widehat D)$, which is why we minimize it in Stage~B.
\looseness=-1 %

\textit{Second}, beyond more accurate cost estimation, the \textit{structured} representation of DWS also supports serving decisions at different \textit{granularities}. 
(i) At the request level, it enables cost-aware scheduling such as SJF, which we evaluate in Section~\ref{subsec:application_evaluation}.
(ii) At the block level, DWS can predict the denoising steps within a block, thus supporting grouping requests with similar predicted next-block workloads to mitigate intra-batch stragglers, or estimating remaining workload for preemptive scheduling.
(iii) Moreover, the predicted output blocks can support KV-Cache estimation and memory allocation.

\textit{Third}, the decoupled design allows DWS to \textit{transfer} across deployments, which is crucial for serving, whereas a direct cost predictor must be retrained.
The transferring experiments in Appendix~\ref{app:cross_deployment} show that the unchanged DWS predictor can improve serving performance across deployments.

\stitle{Effect of individual profiling.}
We measure requests \textit{individually} to accurately isolate and attribute per-request costs. 
Batching and other runtime factors (such as KV-Cache reuse) may alter the cell costs, similarly to changes in hardware, but leave the DWS and its predictor \textit{unchanged}.
Moreover, using such request-level cost estimates to guide batch-level decisions is common practice in LLM serving~\citep{zheng2026scheduling,sheng2024fairness,fu2024efficient}. 
Empirically, with a batch size of 96 (the maximum before OOM), DWS-guided scheduling consistently performs best (shown in Section~\ref{subsec:application_evaluation}).
\looseness=-1 %

\stitle{Limitations.}
(i) For parsimony, our cost model omits small and stable stages (Section~\ref{subsec:anatomy}) and secondary effects such as the quadratic attention computation term.
We quantify the resulting residuals in Appendix~\ref{app:long_context} and find that they are well below the \textit{inevitable} prediction error.
(ii) Our decoupled design supports transfer across deployments, but assumes a \textit{fixed} model family (e.g., LLaDA2.0), block size, and decoding threshold, which typically serve as \textit{static} deployment configurations~\citep{bie2025llada2}.
Changing them requires retraining, which needs only activity matrices $A$ logged from routine serving.
\looseness=-1 %

        \vspace{-4pt}
\section{Experimental Evaluation}
\label{sec:experiment}
        \vspace{-8pt}
In this section, we evaluate (i) the \textit{fidelity} of DWS-based cost model, 
(ii) the \textit{accuracy} and CPU \textit{efficiency} of predictor, and 
(iii) the end-to-end \textit{serving performance} of DWS-guided SJF scheduler.

        \vspace{-6pt}
\subsection{Experimental Settings}
\label{sec:experimental_settings}
        \vspace{-6pt}

\stitle{Testbed and Models.}
Our experiments are conducted on a server with NVIDIA PRO 6000 GPUs (96\,GB each), AMD EPYC 9534 CPUs (64 physical cores each), and 256\,GB of host memory.
We use popular SGLang-0.5.10 as the serving framework and evaluate two advanced dLLMs, LLaDA2.0-mini (TP=1) and LLaDA2.0-flash~\citep{bie2025llada2} (TP=4).

\stitle{Serving configurations.}
Following prior work~\citep{chen2026tide}, we set the maximum output length to 1024 tokens.
The batch size is 96 (the maximum before OOM).
Unless otherwise specified, we use the standard decoding configuration of LLaDA2.0 (e.g., $B=32$, and decoding threshold is $0.95$).
\looseness=-1 %

\stitle{Datasets.}
For serving experiments, we use two \textit{real-world serving} datasets, LMSYS-Chat-1M~\citep{zheng2023lmsys} and ShareGPT~\citep{sharegpt52k}, together with a synthetic dataset Alpaca~\citep{alpaca}. 
We \textit{deliberately} adopt the \textbf{same datasets} from prior AR-LLM serving studies~\citep{zheng2026scheduling, fu2024efficient} to evaluate DWS-based dLLM serving performance.
Evaluations of the cost model and predictor use LMSYS-Chat-1M, with the training and test prompts kept disjoint.

    \vspace{-6pt}
\subsection{Cost model evaluation}
\label{subsec:cost_model_evaluation}
    \vspace{-6pt}

\stitle{Settings.}
We evaluate the fidelity of different cost models here.
To isolate modeling error from prediction error, all models take ground-truth statistics (e.g., output blocks). 
We compare against two scalar proxies: output length and total denoising steps.
Since neither accounts for prefill or KV-refresh costs, we additionally evaluate the denoising component of DWS model alone for direct comparison.
Scalar predictions are mapped to costs via a linear fit on the same 1K profiling traces as DWS.
We partition the 20K traces described in Section~\ref{subsec:anatomy} into four \textit{equal-sized} bins by measured cost and conduct \textit{5-fold cross-validation} within each bin, with results averaged across folds.

\begin{figure}[t]
    \centering

    \includegraphics[width=0.7\textwidth]
    {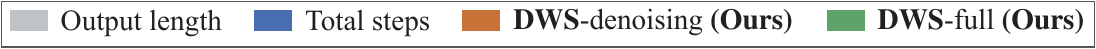}

    \begin{subfigure}[t]{0.267\textwidth}
        \centering
        \includegraphics[
            width=\linewidth,
            height=1.8cm,
            keepaspectratio
        ]{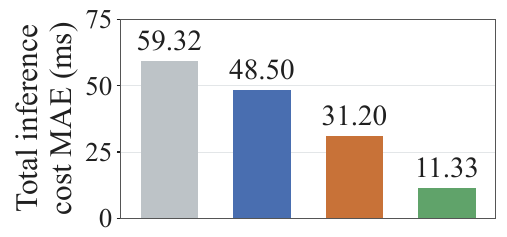}
    \vspace{-16pt}
        \caption{G1 (22-299 ms)}
    \end{subfigure}
    \hfill
    \begin{subfigure}[t]{0.235\textwidth}
        \centering
        \includegraphics[
            width=\linewidth,
            height=1.8cm,
            keepaspectratio
        ]{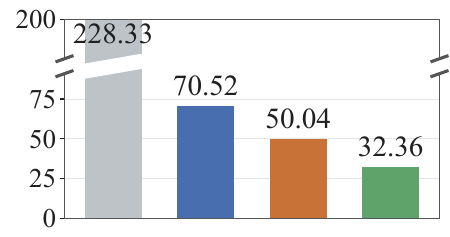}
    \vspace{-16pt}
        \caption{G2 (300-1136 ms)}
    \end{subfigure}
    \hfill
    \begin{subfigure}[t]{0.235\textwidth}
        \centering
        \includegraphics[
            width=\linewidth,
            height=1.8cm,
            keepaspectratio
        ]{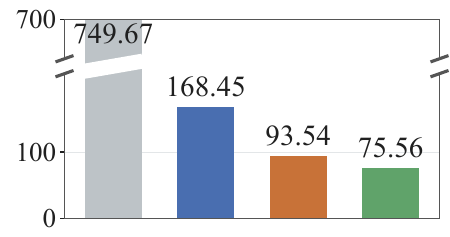}
    \vspace{-16pt}
        \caption{G3 (1137-3899 ms)}
    \end{subfigure}
    \hfill
    \begin{subfigure}[t]{0.235\textwidth}
        \centering
        \includegraphics[
            width=\linewidth,
            height=1.8cm,
            keepaspectratio
        ]{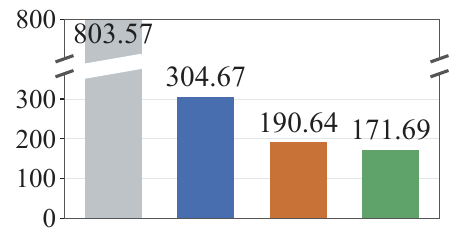}
    \vspace{-16pt}
        \caption{G4 (3900-8605 ms)}
    \end{subfigure}

        \vspace{-8pt}
    \caption{\textbf{Cost-model fidelity across request cost ranges.} DWS-denoising isolates the denoising component from our full model, enabling a direct comparison with scalar baselines. Lower is better.}
    \label{fig:cost_model_eval}
        \vspace{-13pt}
\end{figure}

\stitle{Results.}
As shown in Figure~\ref{fig:cost_model_eval}, Total Steps consistently outperforms Output Length, but still incurs clear error because it treats all denoising steps uniformly.
By accounting for where these steps occur, DWS-denoising reduces MAE by up to $1.80\times$ over Total Steps across all four cost ranges.
Further incorporating prefill and KV-refresh costs yields DWS-full, reducing the overall MAE by $2.04\times$ on average over Total Steps.
These results suggest that preserving the structured denoising process is essential, while incorporating prefill and KV-refresh costs further improves fidelity of the cost model.

    \vspace{-6pt}
\subsection{Predictor evaluation}
\label{subsec:predictor_evaluation}
    \vspace{-6pt}

\stitle{Settings.}
We evaluate the predictors in terms of cost-prediction accuracy, structure-prediction accuracy, and CPU efficiency, and further conduct ablation studies on the DWS predictor.
We train all predictors on the same 100K traces (10K for validation) and evaluate them on 10K disjoint test traces.
\looseness=-1 %

\stitle{Baselines.}
We compare DWS against three prompt-only predictors with different scalar prediction targets:
\textit{Output Length}, which predicts the generated output length;
\textit{Total Steps}, which predicts the total number of denoising steps;
and \textit{Direct Cost}, which directly regresses the measured per-request inference cost.
All of them use the \textit{same} MiniLM encoder.
We further ablate Stage A and multi-horizon supervision (i.e., use only $B$-horizon), and evaluate an INT8-quantized DWS predictor.

\begin{table}[t]
\centering
\caption{\textbf{Prediction accuracy of prompt-only predictors and \smash{\colorbox{cyan!7}{DWS \textit{ablations}}}.}
Cost MAE measures total-cost error; the others measure denoising-structure error. Throughput measurements use the same 256 prompts.
``--'' denotes a non-applicable metric. \textbf{Best} and \underline{second-best} results are marked.}
    \vspace{-6pt}
\label{tab:predictor_eval}
\resizebox{\columnwidth}{!}{%
\begin{NiceTabular}{lccccc}
\CodeBefore
\rowcolor{cyan!4}{6-8}
\Body
\toprule
\textbf{Predictor}
& \makecell[c]{\textbf{Cost}\\\textbf{MAE (ms)} $\downarrow$}
& \makecell[c]{\textbf{Total-step}\\\textbf{MAE} $\downarrow$}
& \makecell[c]{\textbf{Per-block step}\\\textbf{MAE} $\downarrow$}
& \makecell[c]{\textbf{Block-count}\\\textbf{MAE} $\downarrow$}
& \makecell[c]{\textbf{Throughput}\\\textbf{on CPU} (req/s) $\uparrow$} \\
\midrule
Output Length
& 887.97 & -- & -- & \textbf{2.01} & \underline{37.67} \\

Total Steps
& 517.04 & 62.50 & -- & -- & 37.54 \\

Direct Cost (\textit{untransferable})
& 432.29 & -- & -- & -- & 37.55 \\

DWS
& \textbf{355.24} & \textbf{56.06} & \textbf{4.41} & \underline{2.14} & 36.35 \\

\midrule
DWS (\textit{w/o} Stage A)
& 604.43 & 87.91 & 6.02 & 3.68 & 36.34 \\

DWS (\textit{single}-horizon)
& 445.65 & 59.70 & 5.33 & 2.82 & 36.51 \\

DWS (\textit{INT8})
& \underline{357.17} & \underline{56.42} & \underline{4.62} & 2.19 & \textbf{84.29} \\
\bottomrule
\end{NiceTabular}%
}
\end{table}

\stitle{Results.}
As shown in Table~\ref{tab:predictor_eval}, DWS achieves the lowest cost MAE, $1.22$--$2.50\times$ lower than the baselines.
Despite a similar total-step MAE to the total-step predictor, DWS reduces cost MAE by $1.46\times$, confirming the benefit of preserving the denoising \textit{structure} beyond merely counting steps.
Although Direct Cost performs better than other scalar baselines, it entangles prediction with deployment-specific costs and thus requires \textit{retraining} across deployments.

For ablations, removing Stage A or multi-horizon supervision can increase the cost MAE by $1.70\times$ and $1.25\times$, respectively, showing that both designs help the DWS predictor learn the denoising structure better.
INT8 quantization incurs only a \textit{negligible} accuracy loss ($<1\%$ in cost MAE), while improving throughput by $2.32\times$, enabling faster estimation of request inference cost to support downstream serving decisions and making the predictor more \textit{practical} for deployment.

    \vspace{-4pt}
\subsection{Real-world serving evaluation}
\label{subsec:application_evaluation}
    \vspace{-6pt}

\stitle{Settings.}
We evaluate shortest-job-first (SJF) scheduling with different predictors, where requests with lower predicted costs are given higher priority.
Following prior LLM serving systems~\citep{zheng2026scheduling,dont-stop-shahoutdon}, we incorporate a simple aging mechanism to \textit{prevent starvation}: a waiting request's predicted cost is discounted by 10\% every 30\,s until it is selected.

\stitle{Baselines.}
\textit{(i)~FCFS}, the \textit{default} scheduling policy in serving frameworks such as SGLang and vLLM, which can suffer from head-of-line (HOL) blocking~\citep{ssjf-aiops2024qiu}.
\textit{(ii)~PARS (GPU)}~\citep{tao2026ranking}, a recent SJF scheduler for AR LLMs, which builds on learning-to-rank~\citep{fu2024efficient} and introduces noise filtering, with a BERT-base-uncased~\citep{devlin2019bert} encoder. We adapt it to dLLM serving while preserving its \textit{output-length}-based ranking and its original deployment, where the predictor shares the \textit{GPU} with the serving model.
\textit{(iii)~PARS-Cost (GPU)} strengthens PARS by replacing its prediction target and ranking objective with the direct \textit{cost}.
\textit{(iv)~PARS-Cost (CPU)}, whose predictor runs on a single CPU core, \textit{matching} the resources of our predictor.
\textit{(v)~DWS-SJF}, which schedules requests using the cost estimated by our DWS cost model and DWS~(INT8) predictor.
\looseness=-1 %

\begin{figure}[t]
\centering

    \vspace{-6pt}
\includegraphics[width=\textwidth]{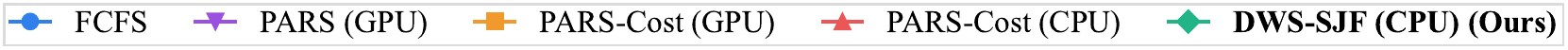}
    \vspace{-10pt}

\begin{subfigure}[t]{0.49\textwidth}
    \centering
    \includegraphics[width=0.49\linewidth]{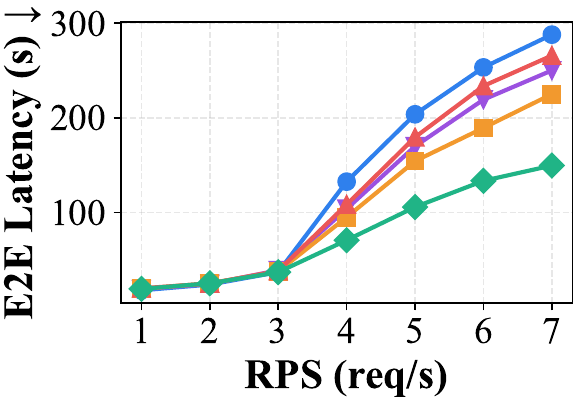}\hfill
    \includegraphics[width=0.49\linewidth]{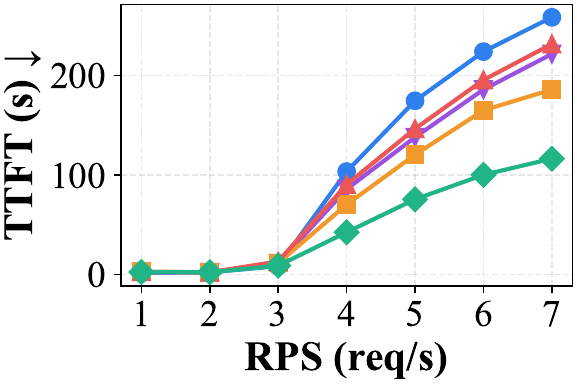}
    \vspace{-4pt}
    \caption{Performance under LMSYS-Chat-1M.}
    \label{fig:online_lmsys}
\end{subfigure}
\hfill
\begin{subfigure}[t]{0.49\textwidth}
    \centering
    \includegraphics[width=0.49\linewidth]{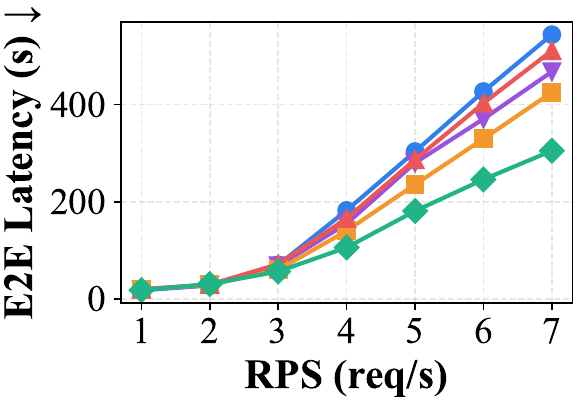}\hfill
    \includegraphics[width=0.49\linewidth]{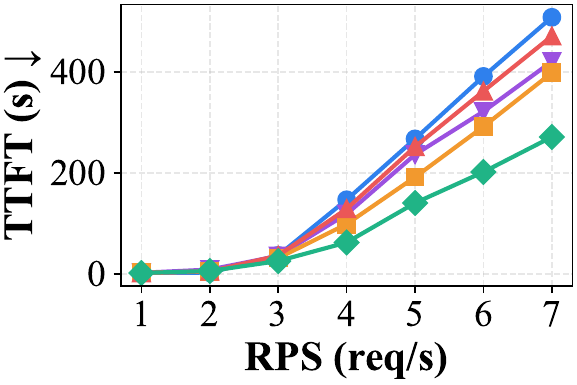}
    \vspace{-4pt}
    \caption{Performance under ShareGPT \textit{(cross dataset)}.}
    \label{fig:online_sharegpt}
\end{subfigure}

    \vspace{-4pt}
\caption{\textbf{Online chatbot serving performance.}
We report average E2E latency and TTFT (P90 in Appendix~\ref{app:additional_online}).
All predictors are trained on LMSYS-Chat-1M, so (b) tests cross-dataset \textit{generalization}.
    \vspace{-14pt}
\looseness=-1 %
}
\label{fig:online}

\end{figure}

\stitle{Online Chatbot.}
Following prior studies~\citep{zheng2026scheduling,fu2024efficient}, we vary the request rate (RPS) to evaluate each scheduler under different system loads, and report the average end-to-end (E2E) latency and TTFT. Additional metrics (e.g., P90 latency) and results are provided in Appendix~\ref{app:additional_online}.
\looseness=-1 %

\textit{Results.}
As shown in Figure~\ref{fig:online}, all schedulers perform similarly under light load, where requests rarely queue.
As the load increases, DWS-SJF grows the slowest and consistently achieves the lowest E2E latency and TTFT on both datasets.
On LMSYS-Chat-1M, it reduces E2E latency and TTFT by up to 1.92$\times$ and 2.22$\times$ over FCFS, and by up to 1.50$\times$ and 1.65$\times$ over PARS-Cost(GPU), the strongest baseline.
These gains carry over to ShareGPT \textit{without} retraining.
The results further reveal the significance of efficient prediction: PARS-Cost (CPU) falls behind because slow prediction delays cost estimates, thus the scheduler may blindly choose expensive requests, reintroducing the HOL blocking that SJF is meant to avoid.
This is precisely why we keep the DWS predictor efficient.

\begin{table}[t]
\centering
\caption{\textbf{Performance of schedulers on SDG tasks.}
All predictors are trained on LMSYS-Chat-1M traces collected with LLaDA2.0-mini. We evaluate \textit{(i)} in-domain, \smash{\colorbox{gray!16}{\textit{(ii)} cross-dataset}}, \smash{\colorbox{cyan!7}{\textit{(iii)} cross-model}}, and \smash{\colorbox{blue!7}{\textit{(iv)} cross-model-and-dataset}} settings to test the \textit{generalization} of these schedulers.}
    \vspace{-6pt}
\label{tab:sdg_performance}
\resizebox{\columnwidth}{!}{
\begin{NiceTabular}{lcllccccc}
\CodeBefore
\rectanglecolor{gray!15}{4-2}{5-9}
\rectanglecolor{cyan!6}{6-2}{7-9}
\rectanglecolor{blue!5}{8-2}{9-9}
\Body
\toprule
\textbf{\makecell{Serving\\Model}}
& 
& \textbf{Dataset}
& \textbf{SDG Metrics}
& \textbf{FCFS}
& \textbf{\makecell{PARS\\(GPU)}}
& \textbf{\makecell{PARS-Cost\\(GPU)}}
& \textbf{\makecell{PARS-Cost\\(CPU)}}
& \textbf{\makecell{DWS-SJF\\(CPU) (Ours)}} \\
\midrule

\multirow{4}{*}[-0.6ex]{\makecell[l]{LLaDA\\2.0-mini}}
& \multirow{2}{*}{\textbf{(i)}}
& \multirow{2}{*}{\makecell[l]{LMSYS-\\Chat-1M}}
& Time to 1K Req. (s) $\downarrow$
& 328 & 275 & \underline{168} & 290 & \textbf{120} \\
& & & Completed in 3 min $\uparrow$
& 517 & 661 & \underline{1021} & 625 & \textbf{1299} \\

\cmidrule(lr){2-9}

& \multirow{2}{*}{\textbf{(ii)}}
& \multirow{2}{*}{Alpaca}
& Time to 1K Req. (s) $\downarrow$
& 197 & 159 & \underline{108} & 165 & \textbf{75} \\
& & & Completed in 3 min $\uparrow$
& 912 & 1127 & \underline{1545} & 1069 & \textbf{1836} \\

\midrule

\multirow{4}{*}[-0.6ex]{\makecell[l]{LLaDA\\2.0-flash}}
& \multirow{2}{*}{\textbf{(iii)}}
& \multirow{2}{*}{\makecell[l]{LMSYS-\\Chat-1M}}
& Time to 1K Req. (s) $\downarrow$
& 1598 & 1194 & \underline{1085} & 1382 & \textbf{592} \\
& & & Completed in 3 min $\uparrow$
& 103 & 155 & \underline{181} & 138 & \textbf{310} \\

\cmidrule(lr){2-9}

& \multirow{2}{*}{\textbf{(iv)}}
& \multirow{2}{*}{Alpaca}
& Time to 1K Req. (s) $\downarrow$
& 909 & 752 & \underline{681} & 825 & \textbf{407} \\
& & & Completed in 3 min $\uparrow$
& 198 & 263 & \underline{292} & 245 & \textbf{468} \\

\bottomrule
\end{NiceTabular}
}
    \vspace{-14pt}
\end{table}

\stitle{Offline Synthetic Data Generation (SDG).}
SDG often aims to obtain a target number of samples from a large prompt pool as quickly as possible, making it a natural testbed for SJF scheduling. Following prior work~\citep{zheng2026scheduling,fu2024efficient}, we submit 3K requests at once and evaluate (i) how long to complete 1K requests and (ii) how many requests are completed within 3 minutes.
For cross-model evaluation, all cost-based schedulers are re-profiled on the \textit{same} 1K LLaDA2.0-flash traces.
\looseness=-1 %

\textit{Results.}
Table~\ref{tab:sdg_performance} shows that DWS-SJF performs best on both metrics across all serving models and datasets.
On LLaDA2.0-mini with LMSYS-Chat-1M, it completes the first 1K requests $2.73\times$ faster than FCFS and finishes $2.51\times$ more requests within 3 minutes, outperforming  PARS-Cost (GPU) (the strongest baseline) by $1.40\times$ and $1.27\times$, respectively.
These gains also hold on Alpaca and LLaDA2.0-flash \textit{without retraining}, suggesting its \textit{generalization} beyond the training workload and serving model.
Notably, the gains of PARS-Cost (GPU) shrink substantially when transferred to LLaDA2.0-flash, as its predictor is \textit{coupled} to deployment-specific costs.
In contrast, DWS-SJF retains most of its gains by \textit{decoupling} its predictor from deployment-specific cost factors.
\looseness=-1 %

\stitle{Additional studies.}
Due to space constraints, we defer the following studies to the appendix:
(i) additional results, including a MiniLM-based PARS-Cost baseline, a comparison with Oracle-SJF, and tail latency (Appendix~\ref{app:additional_online}),
(ii) cross-deployment experiments (PRO 6000 96G TP=1 $\rightarrow$ A6000 48G TP=2) (Appendix~\ref{app:cross_deployment}),
(iii) effect of the encoder (Appendix~\ref{app:encoder_effect}),
and (iv) long-context robustness (Appendix~\ref{app:long_context}).
\looseness=-1 %

    \vspace{-6pt}
\section{Related Work}
\label{sec:related_work}
    \vspace{-6pt}

Due to space constraints, we focus here on the most closely related dLLM serving studies and defer further discussion to Appendix~\ref{app:more_related_work}, including dLLM inference acceleration and AR LLM serving.

For dLLMs, a growing body of recent work focuses on improving the efficiency of dLLM serving.
dLLM-Serve~\citep{fan2026dllm} improves memory and resource efficiency through phase-aware execution, while Optimus~\citep{wei2026optimus} and BlockServe~\citep{zhu2026blockserve} improve serving efficiency under dynamic loads and heterogeneous convergence through adaptive decoding and batching.
DiLaServe~\citep{chang2026dilaserve} further targets SLO-aware serving with deadline-aware scheduling and adaptive resource management.
These systems primarily optimize dLLM execution at runtime, so they are \textit{complementary} to DWS: DWS provides a fine-grained and deployment-adaptable cost signal \textit{before} inference, which can further support scheduling and resource management.

    \vspace{-6pt}
\section{Conclusion}
\label{sec:conclusion}
    \vspace{-6pt}

We present DWS, a structured workload abstraction for dLLM serving.
Unlike scalar proxies such as output length or total denoising steps, DWS preserves the heterogeneous computation structure across output blocks and denoising steps, enabling a more \textit{faithful} estimation of request inference cost.
We further design an accurate prompt-only DWS predictor for each request's \textit{fine-grained} denoising structure, whose lightweight design enables \textit{efficient} prediction even on a single CPU core.
By \textit{decoupling} request-dependent execution behavior from deployment-specific cost factors, DWS can adapt across deployments through lightweight profiling without retraining its prompt-only predictor.
Experiments show that DWS consistently improves both cost-modeling fidelity and cost-prediction accuracy, yielding better serving performance with strong generalization.

\subsection*{AI use statement}

In this work, we used generative AI tools to improve the clarity and
readability of the manuscript.

\subsection*{Ethics statement}

We are not aware of any specific ethical concerns associated with this work.
Our experiments use publicly available datasets
(LMSYS-Chat-1M, ShareGPT, Alpaca) purely as serving workloads.

\subsection*{Reproducibility statement}

Implementation details and hyperparameters are provided in
Section~\ref{sec:experimental_settings} and
Appendix~\ref{app:predictor_training}.

\bibliography{references}
\bibliographystyle{iclr2027_conference}

\appendix

\section{Notation Conventions}
\label{app:notation}

\begin{table}[t]
\centering
\caption{Summary of notation.}
\label{tab:notation}
\setlength{\tabcolsep}{5pt}
\renewcommand{\arraystretch}{1.12}
\begin{tabularx}{\textwidth}{lX}
\toprule
\textbf{Symbol} & \textbf{Meaning} \\
\midrule

\multicolumn{2}{l}{\textit{Request and denoising execution}} \\[1pt]
$x$ &
Input prompt of a request. \\

$B$ &
Configured block size, i.e., the maximum number of tokens and denoising steps within one output block. \\

$N_{\max}$ &
Maximum number of output blocks allowed by the serving configuration. \\

$m$ &
Number of prompt blocks. \\

$n$ &
Number of generated output blocks. \\

$b$ &
Output-block index, $b\in\{1,\ldots,N_{\max}\}$. \\

$s$ &
Within-block denoising-step index, $s\in\{1,\ldots,B\}$. \\

$s_b$ &
Number of denoising steps executed for output block $b$. \\

$A\in\{0,1\}^{N_{\max}\times B}$ &
Denoising activity matrix of a realized request execution. \\

$A_{b,s}$ &
Activity indicator of cell $(b,s)$; $A_{b,s}=1$ iff block $b$ is generated and executes at least $s$ denoising steps. \\

\midrule
\multicolumn{2}{l}{\textit{Cost modeling}} \\[1pt]

$C$ &
Modeled total inference cost of a request. \\

$C_{\mathrm{prefill}}$,
$C_{\mathrm{KVrefresh}}$,
$C_{\mathrm{denoising}}$ &
Modeled costs of prefill, KV refresh, and denoising, respectively. \\

$c^\star_{b,s}$ &
Measured cost of denoising cell $(b,s)$ in a realized execution. \\

$K(m)\in\mathbb{R}^{N_{\max}\times B}$ &
Profiled cell-cost matrix for a request with $m$ prompt blocks. \\

$K_{b,s}(m)$ &
Modeled cost of denoising cell $(b,s)$ for prompt length $m$. \\

$W\in\mathbb{R}^{N_{\max}\times B}$ &
Request-shared baseline cell-cost matrix. \\

$\bar m$ &
Reference prompt-block count used when calibrating the cell-cost model. \\

$\delta$ &
Prompt-length correction coefficient in
$K_{b,s}(m)=W_{b,s}+\delta(m-\bar m)$. \\

$\alpha$ &
Profiled per-block coefficient for the prefill cost,
$C_{\mathrm{prefill}}(m)=\alpha m$. \\

$\beta$ &
Profiled per-block coefficient for the KV-refresh cost,
$C_{\mathrm{KVrefresh}}(n)=\beta n$. \\

\midrule
\multicolumn{2}{l}{\textit{DWS and workload prediction}} \\[1pt]

$D(x)\in[0,1]^{N_{\max}\times B}$ &
Denoising Workload Surface (DWS) conditioned on prompt $x$. \\

$D_{b,s}(x)$ &
Activity probability of cell $(b,s)$:
$P(A_{b,s}=1\mid x)$. \\

$\hat D(x)$ &
DWS predicted from prompt $x$. \\

$p^{\mathrm{block}}_k(x)$ &
Predicted probability $P(n=k\mid x)$ of generating exactly $k$ output blocks. \\

$p^{\mathrm{step}}_{b,t}(x)$ &
Predicted conditional probability
$P(s_b=t\mid n\ge b,x)$. \\

$z(x)$ &
Pooled prompt representation produced by the predictor encoder. \\

$h$ &
Step horizon used in multi-horizon workload supervision. \\

$T^{(h)}$ &
Workload truncated at horizon $h$:
$\sum_{b=1}^{n}\min(s_b,h)$. \\

\bottomrule
\end{tabularx}
\end{table}

Table~\ref{tab:notation} summarizes the main notation used throughout the paper.
We use $(b,s)$ to index a denoising cell, where $b$ denotes the output-block index and $s$ the within-block denoising-step index.
Quantities marked with $^\star$ denote measured execution costs, whereas their unmarked counterparts denote the corresponding quantities in our cost model.
For prediction, $D(x)$ denotes the prompt-conditioned DWS target and $\hat D(x)$ its predicted value.

\section{Modeling Prefill and KV-Refresh Costs}
\label{app:prefill-kv}

\begin{figure}[h]
\centering
    \includegraphics[width=0.6\textwidth]{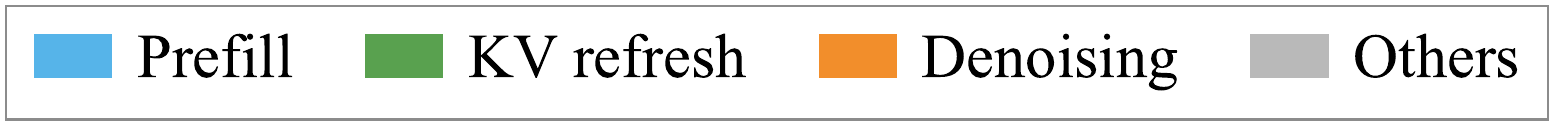}
    
\begin{subfigure}[t]{0.49\textwidth}
    \centering
    \includegraphics[
        width=\linewidth,
        height=3cm,
        keepaspectratio
    ]{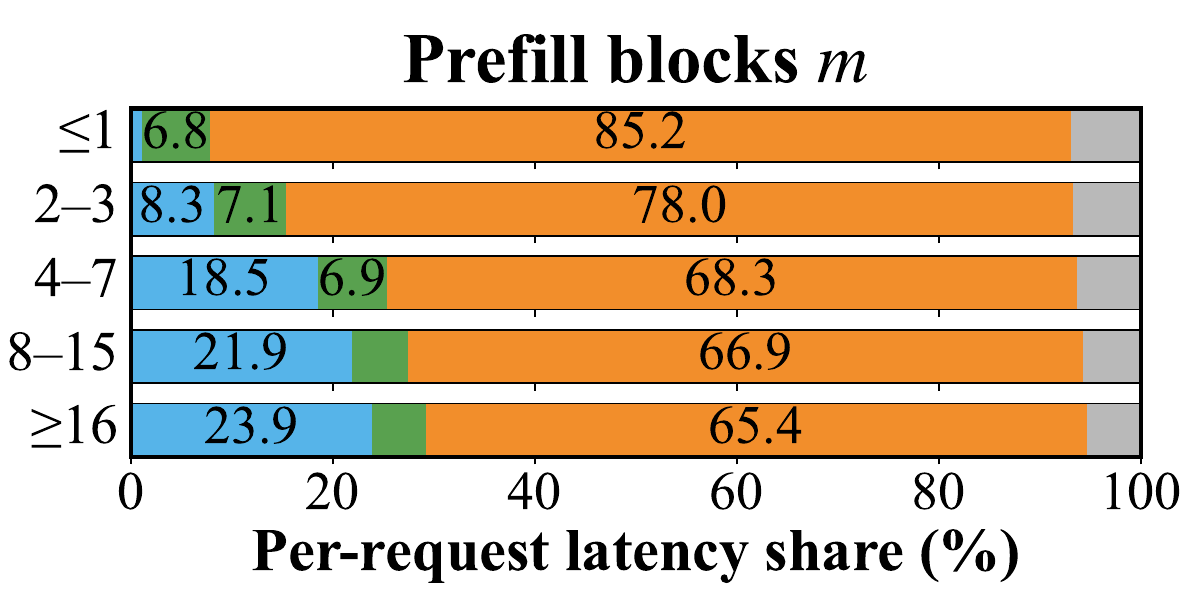}
    \caption{Varying prompt blocks.}
    \label{fig:component-share-vs-prompt}
\end{subfigure}
\hfill
\begin{subfigure}[t]{0.49\textwidth}
    \centering
    \includegraphics[
        width=\linewidth,
        height=3cm,
        keepaspectratio
    ]{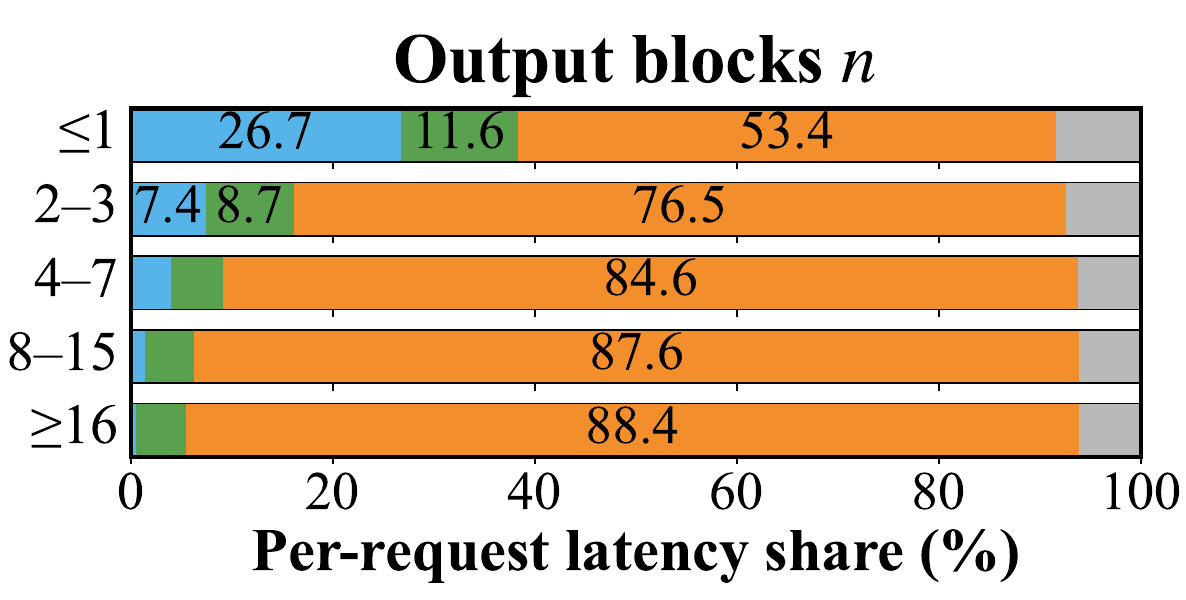}
    \caption{Varying output blocks.}
    \label{fig:component-share-vs-output}
\end{subfigure}
\caption{Latency breakdown under varying block counts.}
\label{fig:component-share-vs-blocks}

\end{figure}

\begin{table}[t]
    \centering
    \caption{Two-way ANOVA decomposition of component latency.}
    \label{tab:anova}

    \footnotesize
    \setlength{\tabcolsep}{3pt}
    \renewcommand{\arraystretch}{1.08}

    \begin{adjustbox}{max width=\linewidth,center}
        \begin{tabular}{@{}lccc@{}}
            \toprule
            \textbf{Component}
            & \textbf{Prompt blocks} $m$
            & \textbf{Output blocks} $n$
            & \textbf{Interaction} \\
            \midrule
            Prefill
            & \textbf{99.87\%}
            & 0.01\%
            & 0.12\% \\
            
            KV refresh
            & 0.15\%
            & \textbf{99.26\%}
            & 0.59\% \\
            \bottomrule
        \end{tabular}
    \end{adjustbox}
\end{table}

This section details the prefill and KV-refresh cost models summarized in Section~\ref{subsec:anatomy}.
We first identify their cost drivers (Appendix~\ref{app:cost-drivers}) and validate the linear models over the full supported context range (Appendix~\ref{app:linear-fit}), and then examine whether higher-order terms improve fidelity in long-context regimes (Appendix~\ref{app:long_context}).

\subsection{Cost Drivers}
\label{app:cost-drivers}
Mechanically, prefill processes each prompt block once, whereas KV refresh is performed once after each output block is fully denoised.
This suggests that the number of prompt blocks $m$ and the number of output blocks $n$ are their respective primary cost drivers.
To verify this, we apply a two-way ANOVA decomposition~\citep{fisher1930statistical,gelman2005analysis} to the 20K traces described in Section~\ref{subsec:anatomy}, quantifying the relative contributions of $m$, $n$, and their interaction.
As shown in Table~\ref{tab:anova}, $m$ accounts for 99.87\% of the decomposed variation in prefill latency, while $n$ accounts for 99.26\% of that in KV-refresh latency, with interaction contributions below 1\% in both cases.
These clear separations justify modeling prefill and KV-refresh costs as functions of $m$ and $n$, respectively.

\subsection{Linear Cost Models}
\label{app:linear-fit}
The ANOVA decomposition identifies the cost drivers but does not establish their functional relationship with cost.
We therefore profile each component while sweeping its driver over the full context range supported by LLaDA2.0.

\begin{figure}[h]
    \centering

    \begin{subfigure}[t]{0.45\linewidth}
        \centering
        \includegraphics[width=0.9\linewidth]{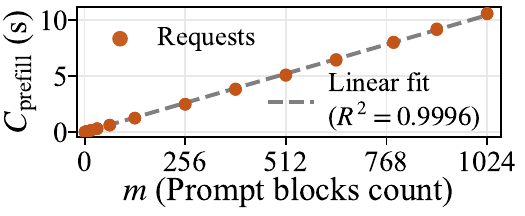}\
        \caption{$C_{\mathrm{prefill}}$ scales linearly with $m$.}
        \label{subfig:app_c_prefill_vs_m}
    \end{subfigure}
    \hfill
    \begin{subfigure}[t]{0.45\linewidth}
        \centering\
        \includegraphics[width=0.9\linewidth]{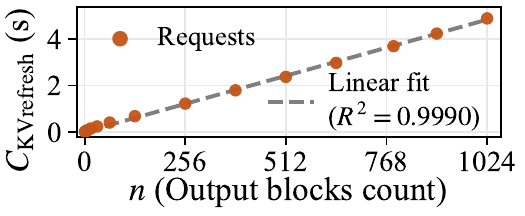}\
        \caption{$C_{\mathrm{KVrefresh}}$ scales linearly with $n$.}
        \label{subfig:app_c_kv_vs_n}
    \end{subfigure}

    \caption{Linear fits for prefill and KV-refresh costs.}
    \label{fig:app_c_prefill_and_c_kv}
\end{figure}

\stitle{Prefill cost.}
We sweep the prompt length from 32 tokens ($m=1$) to 32K tokens ($m=1{,}024$, the context limit of LLaDA2.0).
As shown in Figure~\ref{subfig:app_c_prefill_vs_m}, the linear model $C_{\text{prefill}}(m) \triangleq \alpha m$ fits the measurements closely ($R^2 = 0.9996$).
Although attention introduces an $O(m^2)$ term in principle, this term remains small relative to the linear cost of the QKV projections and feed-forward layers within this range, as further analyzed in Appendix~\ref{app:long_context}.

\stitle{KV-refresh cost.}
Since each completed output block triggers exactly one KV refresh, the number of refreshes equals $n$.
We sweep the output length from 32 tokens ($n=1$) to 32K tokens ($n=1{,}024$) by disabling early termination (\texttt{ignore\_eos=True}).
As shown in Figure~\ref{subfig:app_c_kv_vs_n}, the linear model $C_{\text{KVrefresh}}(n) \triangleq \beta n$ also fits the measurements closely ($R^2 = 0.9990$), with higher-order effects analyzed in Appendix~\ref{app:long_context}.

Figure~\ref{fig:component-share-vs-blocks} provides a complementary view by breaking down per-request latency by block counts.
Along the prompt dimension, the prefill share increases steadily with $m$, from 1.1\% for requests with at most one prompt block to 23.9\% for those with at least 16 blocks.
Along the output dimension, prefill and KV refresh together account for 38.3\% of latency for single-block outputs, while denoising increasingly dominates as $n$ grows, rising from 53.4\% to 88.4\%.
These trends are consistent with the ANOVA results and further show that prefill and KV refresh weigh most on short requests, which are exactly the requests that SJF scheduling must identify correctly.

\section{Analysis of the Per-Step Denoising Cost}
\label{app:cost-heterogeneity}

In the main text, we observe that denoising steps at different execution positions can incur different costs.
In this section, we further analyze the sources of such cost heterogeneity.

\textit{In short}, we identify two major sources of per-step denoising-cost heterogeneity: (i) dynamic expert routing in mixture-of-experts (MoE) architectures and (ii) the growing context length.
\textit{Notably}, the former is particularly relevant to practical dLLM serving: among all dLLM variants natively supported by the latest available releases of SGLang and vLLM\footnote{SGLang-0.5.20 (2026-09-18) and vllm-0.29.0 (2026-09-09)}, 9/10 use MoE backbones.

\subsection{Model Architecture}
\label{app:model-architecture}

We first analyze the pronounced within-block cost variation in LLaDA2.0-mini through a module-level latency breakdown.
Table~\ref{tab:moe-routing} compares the first and last denoising steps of 64 blocks.
The average forward latency increases from 6.043\,ms to 8.622\,ms.
This increase is concentrated in routed-expert computation: the total latency of the routed-expert layers increases from 3.038\,ms to 5.670\,ms, while the attention latency remains nearly unchanged.
Meanwhile, the average number of activated experts per layer increases from 32.7 to 66.3, and the average number of padded rows increases from 611 to 1,102.

\begin{table}[t]
\centering
\caption{Paired comparison between the first and last denoising steps within the same block on LLaDA2.0-mini.}
\label{tab:moe-routing}
\begin{tabular}{lrrr}
\toprule
Metric & First step & Last step & $\Delta$ \\
\midrule
Forward latency (ms) & 6.043 & 8.622 & +2.579 \\
Routed-expert layers, total (ms) & 3.038 & 5.670 & +2.632 \\
Attention layers, total (ms) & 1.114 & 1.116 & +0.002 \\
Activated experts per layer & 32.7 & 66.3 & +33.6 \\
Padded rows per layer & 611 & 1{,}102 & +491 \\
\bottomrule
\end{tabular}
\end{table}

We next examine the routing behavior underlying this change.
Although each token always selects a fixed number of experts, its hidden state evolves during denoising, which changes the token-to-expert assignments.
As these assignments become more dispersed, a block covers a larger set of experts.
In the current MoE kernel, token rows are padded separately for each expert; therefore, changes in the routing distribution can increase padded computation and alter expert-weight access patterns.

To further isolate the effect of routing itself, we fix the hidden states and assign identical weights to all experts, while changing only the expert-ID assignments.
Even with bitwise-identical model outputs, different routing patterns still incur different execution costs.
This result shows that the routing distribution itself can directly affect the cost of a denoising step.

\subsection{Context Length}
\label{app:context-cost}

Besides model architecture, the cost of a denoising step also depends on the preceding context length.
Within the same output block, the context length remains unchanged across denoising step $s$; instead, it grows with the input prompt length and as generation proceeds to later output blocks.

To quantify this effect, we vary the input context length while controlling the denoising state and block position.
As the input length increases from 256 tokens to approximately 32K tokens, the attention-kernel latency increases from 0.6091\,ms to 1.4104\,ms, while the overall forward latency increases from 6.5730\,ms to 7.3447\,ms.

Overall, the two factors affect complementary dimensions of denoising execution: dynamic routing induced by the model architecture mainly drives cost variation across $s$ within the same block, while context growth further changes the execution cost across requests and output-block positions.

\subsection{Implications for Serving Cost Modeling}
\label{app:heterogeneity-implication}

These analyses have different implications across model architectures.
(i) For dense dLLMs, within-block denoising costs can be relatively homogeneous, while context growth still introduces systematic cost variation across requests and output positions.
(ii) For the \textit{increasingly prevalent} MoE dLLMs, dynamic expert routing further introduces pronounced variation across within-block denoising steps, making the homogeneous per-step assumption used by total-step-based models increasingly inaccurate.

Importantly, DWS does not rely on MoE-specific mechanisms.
Its two-dimensional execution representation remains applicable to dense dLLMs, while its fine-grained modeling becomes particularly valuable for MoE models where per-step cost heterogeneity is stronger.
Specifically, the DWS-based cost model preserves the execution structure indexed by output block $b$ and within-block denoising step $s$, while separating request-dependent execution activity from deployment-specific cell costs.
The profiled cell-cost matrix captures position-dependent cost differences, including those arising from dynamic MoE execution, while prompt-side context dependence is modeled through the prompt-length correction and output-side context growth through the block-indexed cost profile.

Therefore, DWS provides a \textit{unified} structured formulation across dense and MoE dLLMs, while offering greater modeling benefits when the underlying per-step costs are more heterogeneous, as in prevalent MoE-based serving models.

\section{Residual Analysis of the Cost Model}
\label{app:long_context}
In this section, we analyze the residuals induced by the simplifications of our cost model.
We first quantify the effect of omitting the small and stable stages under real serving workloads (Appendix~\ref{app:omit}).
We then study how long prompts and long generated contexts affect the modeling fidelity of prefill, KV refresh, and denoising (Appendices~\ref{app:long_prompt_prefill}--\ref{app:denoising_long_context}).

\subsection{Effect of Omitted Stages}
\label{app:omit}

\stitle{Settings.}
Section~\ref{subsec:cost_model_evaluation} has shown the overall fidelity of our cost model in terms of MAE.
Here, we further examine whether the remaining modeling residuals affect the request ordering used by SJF.
We feed the ground-truth $m$ and $A$ from 3K LMSYS-Chat-1M traces into the cost model, thereby excluding prediction error.
The measured end-to-end cost includes the \textit{Others} stages omitted from our model, which account for only $6.7 \pm 1.4\%$ of latency (Figure~\ref{fig:component-breakdown}).
Therefore, the resulting residual reflects both these omitted stages and other modeling simplifications, such as the quadratic attention term omitted from our linear cost models.
We report both fitting accuracy and ranking consistency.

\stitle{Results.}
The modeled cost achieves a MAPE of $4.30\%$ and $R^2 = 0.9967$.
Importantly, it largely preserves request ordering, with a Spearman's $\rho$ of $0.9991$, and $99.21\%$ pairwise ordering accuracy.
Thus, the remaining modeling residuals have little impact on cost-aware scheduling.

\subsection{Prefill Cost under Long Prompts}
\label{app:long_prompt_prefill}

\stitle{Settings.}
As the prompt context grows, higher-order cost effects may become non-negligible; for example, attention computation can introduce a quadratic term in the prompt length.
Meanwhile, other major operations in prefill, such as projections and feed-forward layers, scale approximately linearly with sequence length.
Therefore, although the main paper adopts the parsimonious linear model $C_{\mathrm{prefill}}(m)=\alpha m$, we further examine whether incorporating such higher-order effects improves modeling fidelity under long prompts.
We compare it against a quadratic alternative, $C_{\mathrm{prefill}}(m)=\alpha_1m+\alpha_2m^2$, over prompt lengths up to 32K tokens, the maximum context length supported by LLaDA2.0.
Since real serving datasets contain relatively few prompts in these extreme length ranges, we construct prompts using randomly sampled tokens for this controlled experiment.
We report MAPE separately over six prompt-length ranges.

\begin{table}[t]
\centering
\caption{\textbf{Prefill-cost modeling under long prompts.}
MAPE of the linear model used in the main paper and a quadratic alternative across different prompt-length ranges.}
\label{tab:long_prompt_prefill}
\begin{tabular}{lcccccc}
\toprule
\textbf{Model} & $\leq$1K & 1--2K & 2--4K & 4--8K & 8--16K & 16--32K \\
\midrule
$\alpha m$ & 8.99\% & 5.01\% & 4.18\% & 3.03\% & 2.00\% & 1.61\% \\
$\alpha_1m+\alpha_2m^2$ & 7.42\% & 3.73\% & 2.50\% & 2.34\% & 1.47\% & 1.52\% \\
\bottomrule
\end{tabular}
\end{table}

\stitle{Results.}
As shown in Table~\ref{tab:long_prompt_prefill}, the prefill cost remains well approximated by the linear model throughout the supported context range.
Although the quadratic model is more flexible and consistently reduces fitting error, its additional gain becomes small in the long-context regime.
For 8--16K prompts, it reduces MAPE from $2.00\%$ to $1.47\%$, while for 16--32K prompts the improvement further shrinks to $1.61\%\rightarrow1.52\%$.
Thus, although self-attention theoretically introduces quadratic computation with sequence length, this higher-order effect provides little additional modeling fidelity within the context range supported by LLaDA2.0.
Moreover, under the real-world workloads characterized in Figure~\ref{fig:component-breakdown}, prefill accounts for only $5.8\%$ of the total inference latency on average, so such component-level improvements have an even smaller effect on the overall cost.
We therefore retain the linear form $C_{\mathrm{prefill}}(m)=\alpha m$ for parsimony.

\subsection{KV-Refresh Cost under Long Outputs}
\label{app:long_output_kv}

\stitle{Settings.}
Each completed output block triggers one KV-refresh operation.
While the refreshed block has a fixed size, later refreshes operate with an increasingly long generated context, so the cost of an individual refresh may grow with the output-block index.
For example, if the refresh cost at block $b$ grows approximately as $\gamma_0+\gamma_1 b$, accumulating it over $n$ output blocks gives
\begin{equation}
\sum_{b=1}^{n}(\gamma_0+\gamma_1 b)
=
\gamma_0 n+\frac{\gamma_1}{2}n(n+1),
\end{equation}
which motivates examining a quadratic component in the total KV-refresh cost.
We therefore compare the linear model $C_{\mathrm{KVrefresh}}(n)=\beta n$ used in the main paper against a quadratic alternative, $C_{\mathrm{KVrefresh}}(n)=\beta_1n+\beta_2n^2$.
For this experiment, we use real request traces to preserve the actual generation behavior of the dLLM, and raise the maximum generation length to 32K tokens, corresponding to the maximum context length supported by LLaDA2.0.
We report MAPE across six output-length ranges.

\begin{table}[t]
\centering
\caption{\textbf{KV-refresh-cost modeling under long outputs.}
MAPE of the linear model used in the main paper and a quadratic alternative across different output-length ranges.}
\label{tab:long_output_kv}
\begin{tabular}{lcccccc}
\toprule
\textbf{Model} & $\leq$1K & 1--2K & 2--4K & 4--8K & 8--16K & 16--32K \\
\midrule
$\beta n$ & 5.71\% & 6.05\% & 5.53\% & 6.60\% & 8.08\% & 10.17\% \\
$\beta_1n+\beta_2n^2$ & 4.15\% & 4.55\% & 4.82\% & 5.70\% & 7.68\% & 9.40\% \\
\bottomrule
\end{tabular}
\end{table}

\stitle{Results.}
As shown in Table~\ref{tab:long_output_kv}, the residual error of the linear KV-refresh model gradually increases for very long outputs, reaching $8.08\%$ for 8--16K tokens and $10.17\%$ for 16--32K tokens.
However, introducing the mechanism-motivated quadratic term provides only limited additional fidelity, reducing the corresponding errors to $7.68\%$ and $9.40\%$.
In particular, even in the longest 16--32K range, the quadratic model improves MAPE by only $0.77$ percentage points.
This indicates that the residual growth under very long outputs cannot be substantially explained by simply increasing the polynomial order of the model.
Moreover, under the real-world workloads characterized in Figure~\ref{fig:component-breakdown}, KV refresh accounts for only $6.7\%$ of the total inference latency on average, further limiting the impact of this modest component-level improvement on the overall cost.
Given the limited gain from the additional parameterization, we retain the linear form $C_{\mathrm{KVrefresh}}(n)=\beta n$ in the main cost model.

\subsection{Denoising Cost under Long Context}
\label{app:denoising_long_context}
We separately study how long prompts and long generated outputs affect the modeling fidelity of our denoising-cost model.
\stitle{Long prompts.}
As discussed in Section~\ref{subsec:denoising_cost}, the cost of a denoising step depends on the preceding context.
Our cost model captures the output-side context growth through the output-block index $b$ in the baseline cell-cost matrix $W$, while modeling the request-dependent prompt-length effect using a shared linear correction:
\begin{equation}
K_{b,s}(m)=W_{b,s}+\delta(m-\bar{m}).
\end{equation}
As the prompt becomes longer, however, higher-order effects from attention may become increasingly visible.
We therefore examine whether introducing an additional quadratic prompt-length term improves modeling fidelity:
\begin{equation}
K^{\mathrm{quad}}_{b,s}(m)
=
W_{b,s}
+\delta_1(m-\bar{m})
+\delta_2(m-\bar{m})^2.
\end{equation}
The corresponding request-level denoising cost is
\begin{equation}
C^{\mathrm{quad}}_{\mathrm{denoising}}
=
\langle W,A\rangle
+\delta_1(m-\bar{m})T
+\delta_2(m-\bar{m})^2T,
\end{equation}
where $T=\sum_{b,s}A_{b,s}$ denotes the total number of executed denoising cells.
Compared with the linear formulation, this alternative introduces only one additional scalar coefficient while retaining the same structured baseline matrix $W$.
Following the controlled setup in Appendix~\ref{app:long_prompt_prefill}, we construct prompts of different lengths using randomly sampled tokens, since real serving datasets contain relatively few prompts in the extreme length ranges.
The random-token construction is used to control the prompt length, while the denoising activity matrix $A$ and denoising latency are obtained from the actual generation process.
We sweep the prompt length up to 32K tokens, the maximum context length supported by LLaDA2.0, and report MAPE separately over six prompt-length ranges.

\begin{table}[t]
\centering
\caption{Denoising-cost modeling under long prompts. MAPE of the linear prompt-length correction and a quadratic alternative across different prompt-length ranges.}
\label{tab:denoising_long_prompt}
\small
\setlength{\tabcolsep}{5pt}
\begin{tabular}{lcccccc}
\toprule
Model & $\leq$1K & 1--2K & 2--4K & 4--8K & 8--16K & 16--32K \\
\midrule
$W_{b,s}+\delta(m-\bar{m})$
& \textbf{3.29\%} & \textbf{2.32\%} & 1.83\% & 2.03\% & 2.51\% & 3.57\% \\
$W_{b,s}+\delta_1(m-\bar{m})+\delta_2(m-\bar{m})^2$
& 3.57\% & 2.95\% & \textbf{1.53\%} & \textbf{1.49\%} & \textbf{1.87\%} & \textbf{2.08\%} \\
\bottomrule
\end{tabular}
\end{table}

As shown in Table~\ref{tab:denoising_long_prompt}, the benefit of the quadratic correction becomes increasingly apparent toward the long-context regime.
While the quadratic model slightly increases MAPE for prompts shorter than 2K tokens, it consistently improves over the linear model beyond 2K.
The reduction grows from 0.30 percentage points for 2--4K prompts to 0.54 and 0.64 points for 4--8K and 8--16K prompts, respectively, and reaches 1.49 points in the longest 16--32K range, reducing MAPE from 3.57\% to 2.08\%.
These results indicate that higher-order prompt-length effects become more pronounced under very long contexts.
At the same time, the linear correction remains accurate throughout the supported context range, with MAPE below 3.6\% in every range.
Given its low residual error and simpler parameterization, we retain the linear form $K_{b,s}(m)=W_{b,s}+\delta(m-\bar{m})$ for parsimony.

\stitle{Long outputs.}
We next examine the effect of long generated contexts on denoising cost.
Unlike the prompt-side dependence, output-side context growth is explicitly indexed by the output-block position $b$ and is thus largely captured by the corresponding entries of the baseline matrix $W$.
We therefore directly evaluate whether the existing cell-cost model remains accurate as generation proceeds to increasingly late output blocks, without introducing an additional output-length correction.
For this experiment, we use real request traces to preserve the actual denoising behavior of the dLLM and raise the maximum generation length to 32K tokens, corresponding to the maximum context length supported by LLaDA2.0.
We report MAPE separately over six output-length ranges.

\begin{table}[t]
\centering
\caption{Denoising-cost modeling under long outputs. MAPE of the profiled cell-cost model across different output-length ranges.}
\label{tab:denoising_long_output}
\small
\setlength{\tabcolsep}{6pt}
\begin{tabular}{lcccccc}
\toprule
\textbf{Output length} & $\leq$1K & 1--2K & 2--4K & 4--8K & 8--16K & 16--32K \\
\midrule
\textbf{MAPE} & 3.56\% & 3.14\% & 3.40\% & 4.86\% & 5.29\% & 5.73\% \\
\bottomrule
\end{tabular}
\end{table}

As shown in Table~\ref{tab:denoising_long_output}, the modeling error remains relatively stable for outputs up to 4K tokens, with MAPE between 3.14\% and 3.56\%.
As the generated context grows further, the residual error becomes increasingly visible: MAPE rises to 4.86\% for 4--8K outputs, 5.29\% for 8--16K, and 5.73\% in the longest 16--32K range.
This trend indicates that secondary context-dependent effects beyond the block-indexed baseline become more pronounced under very long generated contexts.
At the same time, the $b$-indexed baseline matrix $W$ remains accurate throughout the supported output range, with MAPE below 5.8\% in every range.
These results suggest that explicitly indexing the output-side context growth by $b$ captures its dominant effect without requiring an additional output-length correction.
Given the low residual error and the absence of an additional parameterization, we retain the original cell-cost formulation $K_{b,s}(m)=W_{b,s}+\delta(m-\bar{m})$.
\section{Automatic Profiling and Refresh of cost factors Online}
\label{app:profiling}

As discussed in Section~\ref{sec:cost_prediction}, DWS decouples the request-dependent execution structure from the deployment-specific cost factors $\alpha$, $\beta$, and $K(\cdot)$.
This separation allows the cost factors to be automatically calibrated from execution traces and periodically refreshed without retraining the DWS predictor.
We describe the profiling and fitting procedure below.

\stitle{Trace collection.}
For each profiled request $i$, the serving engine records its prompt-block count $m_i$, output-block count $n_i$, and the denoising depth $s_{i,b}$ of each generated block.
These observations directly determine the activity matrix $A_i$, where $A_{i,b,s}=\mathbbm{1}\{b\le n_i \wedge s\le s_{i,b}\}$, and the total number of denoising steps $T_i=\sum_{b,s}A_{i,b,s}$.
Meanwhile, CUDA-event instrumentation measures the corresponding prefill, KV-refresh, and denoising costs, denoted by $y_i^{\mathrm{pre}}$, $y_i^{\mathrm{kv}}$, and $y_i^{\mathrm{den}}$, respectively.
To isolate per-request cost from batching effects, profiling requests are measured individually under the target deployment configuration.

\stitle{Profiling prefill and KV-refresh costs.}
Recall that we model the prefill and KV-refresh costs as $\alpha m$ and $\beta n$, respectively.
Their coefficients can therefore be estimated directly by aggregating the corresponding measured costs:
\begin{equation}
\alpha =
\frac{\sum_i y_i^{\mathrm{pre}}}{\sum_i m_i},
\qquad
\beta =
\frac{\sum_i y_i^{\mathrm{kv}}}{\sum_i n_i}.
\label{eq:profile_alpha_beta}
\end{equation}
Both parameters can thus be refreshed using only simple running statistics.

\stitle{Profiling the cell-cost function.}
The denoising component is more challenging because it contains the fine-grained cell-cost function
$K_{b,s}(m)=W_{b,s}+\delta(m-\bar m)$.
For a profiling set of $N$ requests, we use the step-weighted prompt center
$\bar m=\frac{\sum_i m_iT_i}{\sum_i T_i}$,
such that the modeled denoising cost of request $i$ is
$\hat y_i^{\mathrm{den}}=\langle W,A_i\rangle+\delta(m_i-\bar m)T_i$.

We jointly fit the baseline cell-cost matrix $W$ and the prompt-length correction $\delta$ using weighted least squares:
\begin{equation}
\begin{aligned}
\min_{W\ge 0,\delta}\quad
& \frac{1}{N}\sum_i \omega_i
\left(
\langle W,A_i\rangle
+\delta(m_i-\bar m)T_i
-y_i^{\mathrm{den}}
\right)^2 \\
& + \lambda_s\|\nabla_s W\|_F^2
+ \lambda_b\|\nabla_b W\|_F^2 ,
\end{aligned}
\label{eq:cost_profile_fitting}
\end{equation}
where $\omega_i=(T_i+10)^{-2}$ reduces the dominance of long-running requests in the fitting objective.
The operators $\nabla_s$ and $\nabla_b$ denote first-order differences between neighboring cells along the denoising-step and output-block dimensions, respectively.
The corresponding smoothness terms stabilize the estimation under uneven cell coverage, while the non-negativity constraint on $W$ prevents invalid negative baseline cell costs.
We select $\lambda_s$ and $\lambda_b$ on held-out profiling traces and refit the parameters using the complete profiling set.

\stitle{Incremental statistics and refresh.}
Importantly, refreshing the profile does not require retaining and repeatedly processing all historical traces.
Let $a_i=\mathrm{vec}(A_i)$ and $z_i^0=m_iT_i$.
The data term in Equation~\ref{eq:cost_profile_fitting} can be represented by additive sufficient statistics, including
\begin{equation}
G=\sum_i\omega_i a_i a_i^\top,
\qquad
r=\sum_i\omega_i a_i y_i^{\mathrm{den}},
\qquad
q=\sum_i\omega_i a_i z_i^0,
\label{eq:profile_statistics}
\end{equation}
together with
$\sum_i\omega_i(z_i^0)^2$,
$\sum_i\omega_i z_i^0y_i^{\mathrm{den}}$,
$\sum_iT_i$, and
$\sum_i m_iT_i$.
Therefore, for a newly collected batch of traces, its statistics $\Delta\mathcal{S}$ can be merged into the existing profiling state $\mathcal{S}$ simply as
\begin{equation}
\mathcal{S}\leftarrow\mathcal{S}+\Delta\mathcal{S}.
\label{eq:profile_update}
\end{equation}
Once sufficient new profiling traces are accumulated, we recompute $\alpha$ and $\beta$ using Equation~\ref{eq:profile_alpha_beta} and re-solve Equation~\ref{eq:cost_profile_fitting} to refresh $W$ and $\delta$.
The resulting parameters constitute a new cost profile, while the DWS predictor itself remains unchanged.

\stitle{Deployment-specific profiles.}
Since the cost factors characterize the execution environment, each deployment configuration maintains its own cost profile.
Any change to the serving model, hardware, tensor-parallel configuration, or decoding configuration therefore starts a new profile with fresh statistics.

\stitle{Scope of predictor reuse.}
Although every such change requires a new cost profile, whether the DWS predictor must also be retrained depends on whether and how much the change alters the execution distribution $A \mid x$.
(i) Hardware deployments (e.g., changes in GPU type or tensor parallelism) leave $A \mid x$ unchanged, so re-profiling alone suffices, without collecting a new predictor-training dataset or retraining the predictor.
(ii) Changing the serving model within the same model family may alter $A \mid x$. Nevertheless, the experimental results in Section~\ref{subsec:application_evaluation} and Appendix~\ref{app:additional_online} show that the predictor remains effective with re-profiling alone (see Appendix~L.2 for the analysis).
(iii) In contrast, changing the core decoding configurations (e.g., block size) fundamentally reshapes the denoising structure captured by DWS, so the predictor needs to be retrained, as discussed in the limitations of Section~\ref{sec:discussion}.
In general, transfer across hardware deployments is the most common case, which our decoupled design handles without retraining the predictor.

\section{Predictor Quantization and CPU Efficiency}
\label{app:predictor_quantization}

\stitle{Motivation and setup.}
As discussed in Section~\ref{subsec:application_evaluation}, predictor efficiency can have a substantial impact on serving performance, since delayed cost estimates can weaken cost-aware scheduling decisions.
While GPU acceleration can reduce this overhead, deploying the predictor on GPUs would compete with the serving dLLM for compute and memory resources.
We therefore keep the DWS predictor on CPU and further explore reduced-precision inference to improve its efficiency without substantially degrading prediction accuracy.
Starting from the same trained DWS predictor, we evaluate four numerical formats, FP32, FP16, INT8, and INT4, under the same single CPU core setting.
We measure both prediction accuracy, including total-step MAE, per-block step MAE, and block-count MAE, and predictor throughput.

\begin{table}[t]
\centering
\caption{\textbf{Effect of predictor precision.}
All variants use the same trained DWS predictor and are evaluated on a single CPU core.
Lower is better for MAE, while higher is better for throughput.}
\label{tab:quantization}
\resizebox{\columnwidth}{!}{
\begin{tabular}{lcccc}
\toprule
\textbf{Precision}
& \textbf{Total-step MAE} $\downarrow$
& \textbf{Per-block step MAE} $\downarrow$
& \textbf{Block-count MAE} $\downarrow$
& \textbf{Throughput (req/s)} $\uparrow$ \\
\midrule
FP32 (\textit{Original}) & 56.06 & 4.41 & 2.14 & 36.35 \\
FP16 & 56.19 & 4.47 & 2.16 & 17.08 \\
INT8 & 56.42 & 4.62 & 2.19 & \textbf{84.29} \\
INT4 & 56.92 & 4.82 & 2.26 & 58.71 \\
\bottomrule
\end{tabular}
}
\end{table}

\stitle{Quantization results.}
As shown in Table~\ref{tab:quantization}, reduced-precision inference has only a minor impact on prediction accuracy.
This robustness is likely because our predictor primarily relies on \textit{coarse} prompt-level representations: token embeddings are first aggregated through pooling and then mapped to workload distributions by lightweight prediction heads, while the subsequent probability aggregation further reduces sensitivity to small quantization perturbations.
In particular, INT8 increases the total-step MAE from $56.06$ to $56.42$ ($0.64\%$) and the final cost MAE by only $0.54\%$ compared with FP32, while improving single-core throughput from $36.35$ to $84.29$ requests/s ($2.32\times$).
Given this \textit{negligible} accuracy degradation, we adopt INT8, which achieves the highest measured throughput, for all downstream serving experiments.

Interestingly, predictor throughput does not improve monotonically with lower numerical precision.
This is consistent with the hardware- and kernel-dependent nature of reduced-precision CPU inference: our Zen~4 CPU provides optimized integer execution (e.g., AVX-512 VNNI), while FP16 and INT4 may rely on less efficient kernels or incur additional conversion/packing overhead.
These architectural characteristics are consistent with INT8 achieving the highest measured throughput on our deployment.

\section{Characteristics of LLM Serving Workloads}
\label{app:workload_characteristics}

\begin{figure}[t]
    \centering

    \begin{subfigure}[t]{0.32\linewidth}
        \centering
        \includegraphics[width=\linewidth]{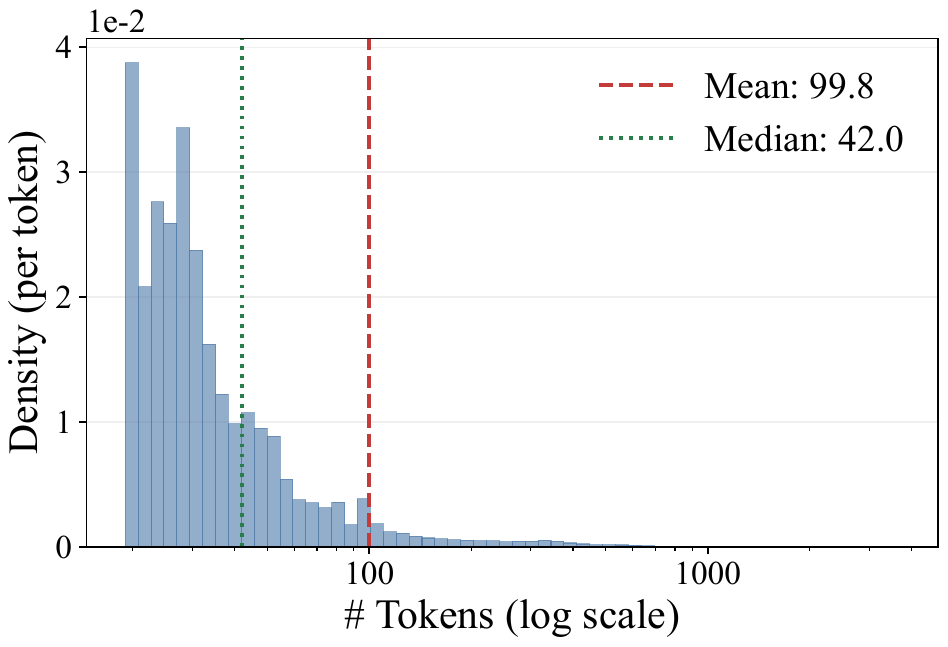}
        \caption{LMSYS prompt length.}
        \label{subfig:lmsys_prompt_length}
    \end{subfigure}
    \hfill
    \begin{subfigure}[t]{0.32\linewidth}
        \centering
        \includegraphics[width=\linewidth]{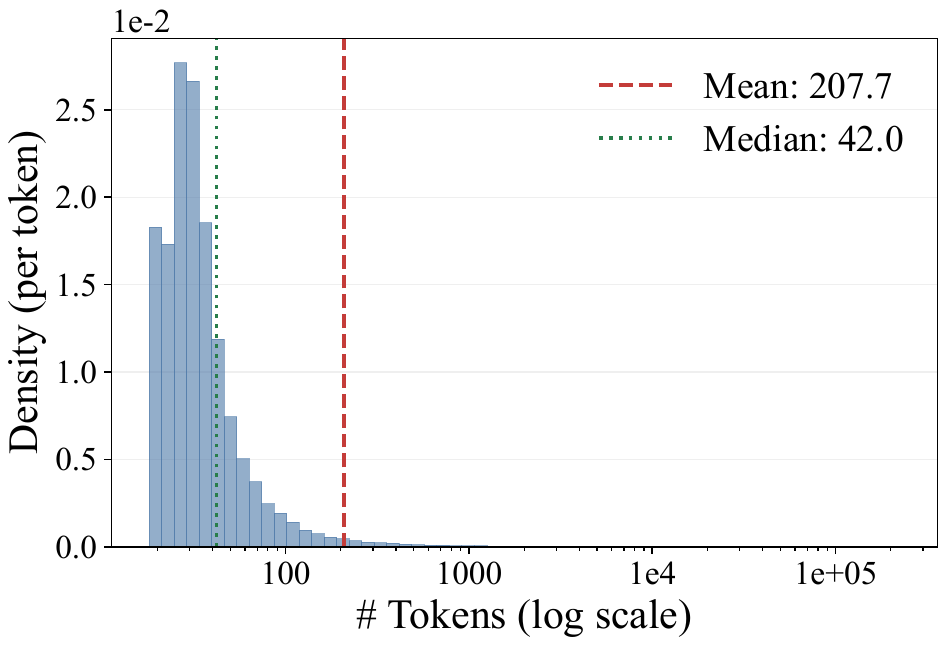}
        \caption{ShareGPT prompt length.}
        \label{subfig:sharegpt_prompt_length}
    \end{subfigure}
    \hfill
    \begin{subfigure}[t]{0.32\linewidth}
        \centering
        \includegraphics[width=\linewidth]{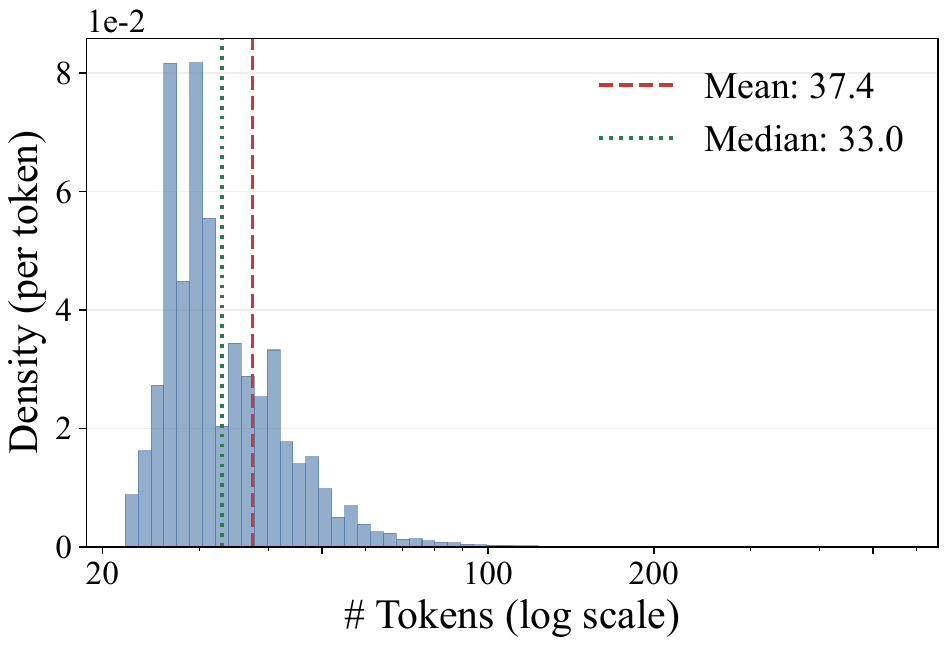}
        \caption{Alpaca prompt length.}
        \label{subfig:alpaca_prompt_length}
    \end{subfigure}

    \vspace{0.5em}

    \begin{subfigure}[t]{0.32\linewidth}
        \centering
        \includegraphics[width=\linewidth]{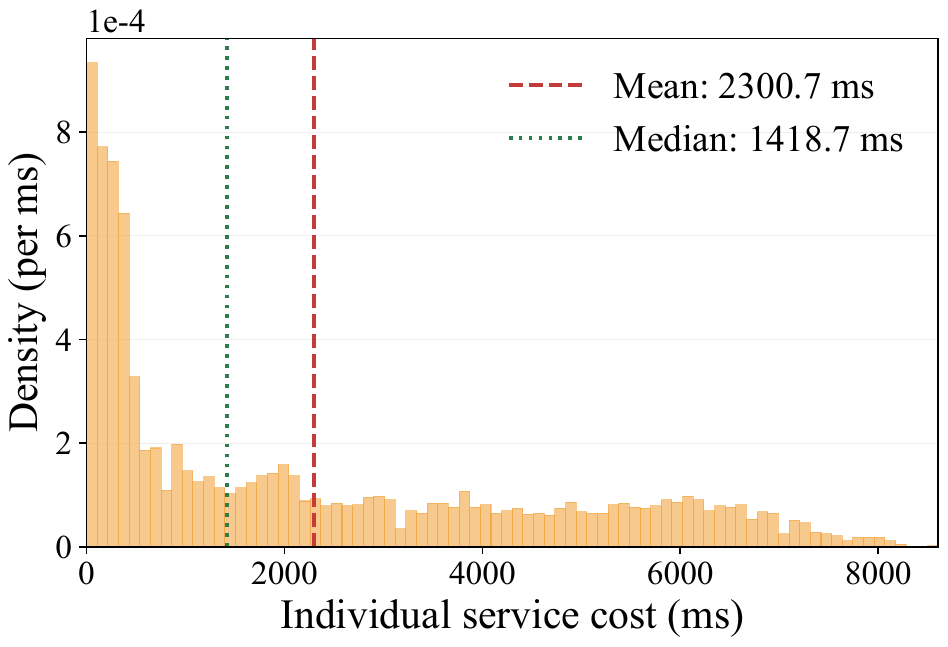}
        \caption{LMSYS inference cost.}
        \label{subfig:lmsys_cost}
    \end{subfigure}
    \hfill
    \begin{subfigure}[t]{0.32\linewidth}
        \centering
        \includegraphics[width=\linewidth]{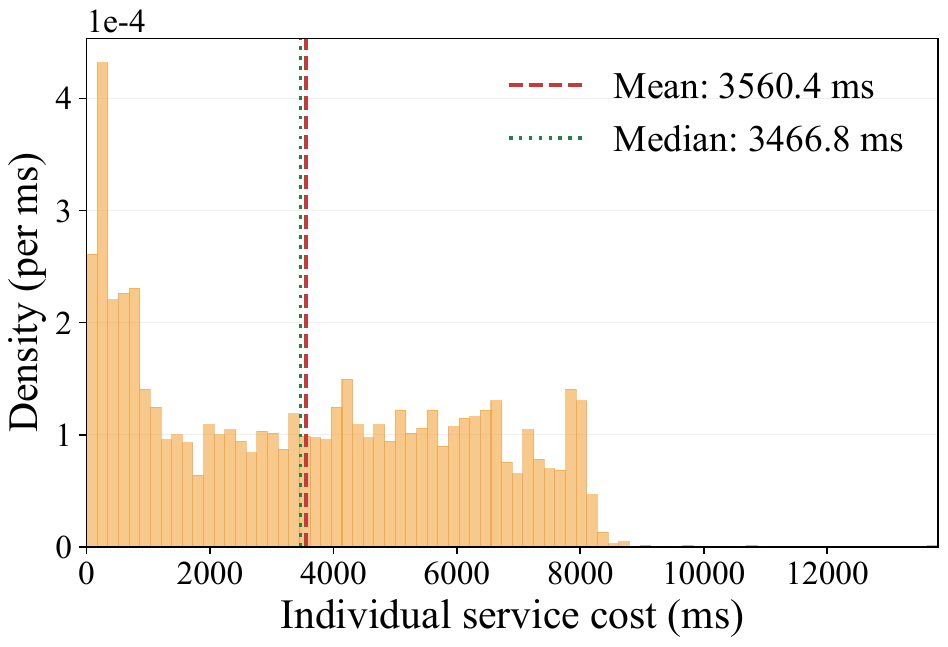}
        \caption{ShareGPT inference cost.}
        \label{subfig:sharegpt_cost}
    \end{subfigure}
    \hfill
    \begin{subfigure}[t]{0.32\linewidth}
        \centering
        \includegraphics[width=\linewidth]{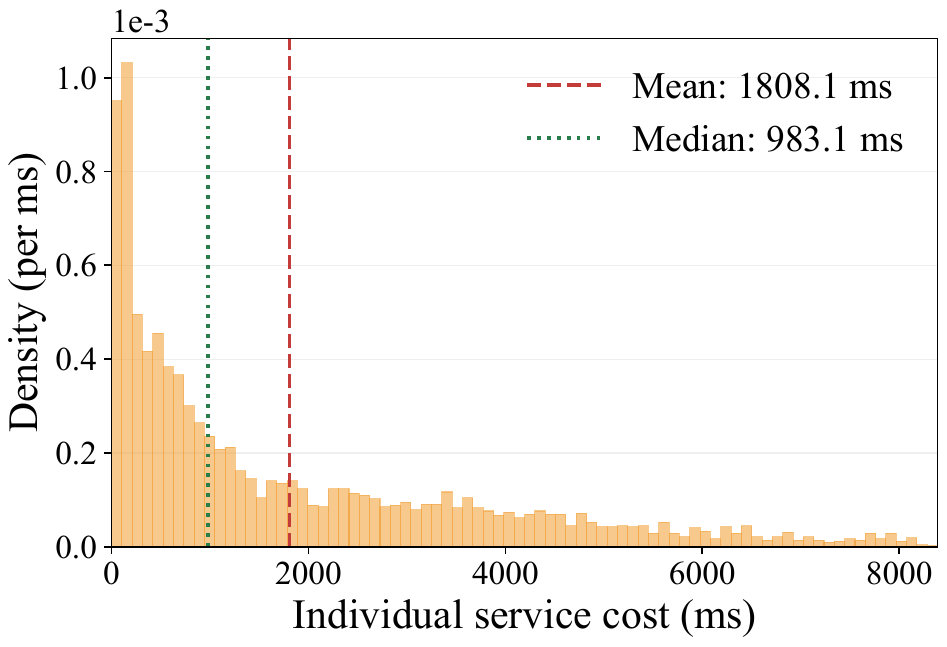}
        \caption{Alpaca inference cost.}
        \label{subfig:alpaca_cost}
    \end{subfigure}

    \caption{\textbf{Characteristics of LLM serving workloads.}
    Top: prompt-length distributions of LMSYS-Chat-1M, ShareGPT, and Alpaca.
    Bottom: individual service-cost distributions of 4K requests from each dataset served by LLaDA2.0-mini.
    Dashed and dotted lines denote the mean and median, respectively.}
    \label{fig:workload_characteristics}
\end{figure}

We further characterize the workloads used in our serving evaluation from two complementary perspectives: the input prompt length and the actual per-request inference cost.
For prompt length, we report the distributions over the complete LMSYS-Chat-1M, ShareGPT, and Alpaca datasets.
For inference cost, we additionally profile 4K individually served requests from each dataset with LLaDA2.0-mini, using the same serving configuration as in the main experiments.
Figure~\ref{fig:workload_characteristics} summarizes the resulting distributions.

\stitle{Prompt-length heterogeneity.}
The prompt distributions vary substantially across datasets.
LMSYS-Chat-1M has a mean/median prompt length of $99.8/42$ tokens, while ShareGPT has the same median of $42$ tokens but a substantially larger mean of $207.7$, reflecting its heavier long-context tail.
In contrast, Alpaca is considerably more concentrated around short prompts, with a mean/median length of $37.4/33$ tokens.
These differences expose substantial distribution shifts in the input characteristics encountered by the serving system.

\stitle{Inference-cost heterogeneity.}
The resulting dLLM inference costs are also highly heterogeneous, both within and across workloads.
For LMSYS-Chat-1M and Alpaca, the mean service costs ($2300.7$ and $1808.1$\,ms) are substantially larger than their medians ($1418.7$ and $983.1$\,ms), indicating pronounced right-skewed cost distributions.
ShareGPT exhibits a generally heavier workload, with a mean/median cost of $3560.4/3466.8$\,ms and requests broadly distributed over multi-second execution times.
Notably, LMSYS-Chat-1M and ShareGPT have the same median prompt length of $42$ tokens, yet their median inference costs differ by more than $2\times$, illustrating that similar input-length characteristics do not imply similar dLLM execution workloads.

Overall, these distributions reveal both substantial \textit{within-workload cost heterogeneity} and clear \textit{cross-workload distribution shifts}.
The former creates opportunities for cost-aware request scheduling, while the latter motivates evaluating serving policies across diverse workloads rather than tuning them to a single request distribution.
\section{Predictor Training Details}
\label{app:predictor_training}

We provide additional training details for the coarse-to-fine DWS predictor introduced in Section~\ref{sec:cost_prediction}.
Importantly, the predictor is trained entirely from request execution structure, without using measured latency or any deployment-specific cost factors $\alpha$, $\beta$, or $K(\cdot)$.
This keeps the learned predictor independent of the serving hardware and allows it to be reused with different deployment-specific cost profiles.

\stitle{Predictor input.}
Due to the encoder's maximum input length of 512 tokens, we follow prior work~\citep{ssjf-aiops2024qiu} and left-truncate longer prompts, i.e., only their last 512 tokens are fed to the encoder.
The predictor takes the prompt text and a prompt-length feature as input.

\stitle{Stage A: workload-oriented encoder adaptation.}
Stage A adapts the pretrained MiniLM encoder from generic semantic representation to dLLM denoising workload.
Recall that, for a horizon $h$, the supervision target is
$T^{(h)}=\sum_{b=1}^{n}\min(s_b,h)=\sum_{b=1}^{N_{\max}}\sum_{s=1}^{h}A_{b,s}$.
We use the horizon set $\mathcal{H}=\{1,4,8,12,\ldots,B\}$ and let $\hat T^{(h)}$ denote the corresponding prediction from the Stage-A workload head.
Because the numerical ranges of $T^{(h)}$ increase with $h$, we normalize the prediction and target separately for each horizon before computing the loss.
Denoting the normalization scale for horizon $h$ by $\nu_h$, we optimize
\begin{equation}
\mathcal{L}_{\mathrm{A}}
=
\frac{1}{|\mathcal{H}|}
\sum_{h\in\mathcal{H}}
\operatorname{SmoothL1}
\left(
\frac{\hat T^{(h)}-T^{(h)}}{\nu_h}
\right),
\label{eq:stage_a_loss}
\end{equation}
where
$\operatorname{SmoothL1}(u)=\frac{1}{2}u^2$ for $|u|<1$ and
$\operatorname{SmoothL1}(u)=|u|-\frac{1}{2}$ otherwise.
This horizon-wise normalization prevents large-$h$ targets from dominating optimization merely because of their larger scale.
We do not introduce additional ranking or count losses in this stage.

Training proceeds progressively to avoid abruptly perturbing the pretrained representation.
We first freeze the encoder for one epoch and train only the workload head, and then jointly optimize the head and all 12 encoder layers for another 20 epochs.
The head uses a learning rate of $3\times10^{-4}$, whereas the encoder uses a smaller learning rate of $4\times10^{-5}$.
For both, the learning rate is held constant for the first half of each training phase and then linearly decayed to zero without warm-up.
We use a batch size of 128 and select the Stage-A checkpoint according to validation workload MAE.

For consecutive horizons $h' < h$ in $\mathcal{H}$, the difference $T^{(h)} - T^{(h')} = \sum_{s=h'+1}^{h}\sum_{b} A_{b,s}$ aggregates the number of blocks reaching each step $s \in (h', h]$.
The multi-horizon targets therefore provide binned column sums of DWS in expectation, capturing at a coarse granularity the within-block cost heterogeneity that dominates dLLM inference cost (Appendix~\ref{app:cost-heterogeneity}), while leaving the block dimension and step-level resolution to Stage B.
This is consistent with the ablation in Table~\ref{tab:predictor_eval}, where replacing them with the single total-step target increases cost MAE by $1.25\times$.

\stitle{Stage B: DWS refinement.}
Stage B replaces the coarse multi-horizon supervision with direct supervision on the final DWS.
For each request, its realized activity matrix $A\in\{0,1\}^{N_{\max}\times B}$ provides a binary observation for every denoising cell, while the factorized predictor in Equation~\ref{eq:dws_predictor} produces the corresponding activity probabilities $\hat D\in[0,1]^{N_{\max}\times B}$.
We directly optimize the Brier loss over the complete surface:
\begin{equation}
\mathcal{L}_{\mathrm{B}}
=
\frac{1}{N_{\max}B}
\sum_{b=1}^{N_{\max}}
\sum_{s=1}^{B}
\left(
\hat D_{b,s}-A_{b,s}
\right)^2.
\label{eq:stage_b_loss}
\end{equation}
This objective directly aligns training with the probabilistic interpretation of DWS, while the factorized parameterization itself guarantees the survival structure discussed in Section~\ref{sec:cost_prediction}.
Notably, Stage B still contains no latency-based objective; deployment-specific cell costs are only applied after prediction through the profiled cost model.

We again use progressive unfreezing.
The encoder is frozen for the first four epochs, allowing the newly initialized DWS heads to adapt to the representation learned in Stage A.
We then unfreeze only the top four encoder layers and jointly optimize them with the prediction heads for another four epochs.
The encoder and prediction heads use learning rates of $5\times10^{-6}$ and $2\times10^{-4}$, respectively.
A single learning-rate schedule is maintained continuously across the eight epochs, with a $3\%$ warm-up followed by cosine decay to $5\%$ of the initial learning rate.
The batch size is 256.

\section{Additional Serving Results}
\label{app:additional_online}

\begin{table}[t]
\centering
\caption{\textbf{Additional online serving metrics at RPS=7} (the maximum RPS evaluated in the main-text online experiments).
MiniLM-based PARS-Cost (INT8) uses the \textit{same} encoder and CPU resources as our predictor.
We report average and P90 E2E latency, TTFT, and per-token latency.
Serving model is LLaDA2.0-mini.
Lower is better.}
\label{tab:additional_online_metrics}
\resizebox{\columnwidth}{!}{
\begin{tabular}{lllccc}
\toprule
\textbf{Dataset}
& \textbf{Scheduler}
& \textbf{Statistic}
& \textbf{E2E Latency (s)}
& \textbf{TTFT (s)}
& \textbf{Per-token Latency (s/token)} \\
\midrule

\multirow{6}{*}{\makecell{LMSYS-\\Chat-1M}}
& \multirow{2}{*}{FCFS}
& Avg & 287.93 & 258.57 & 5.52 \\
& & P90 & 674.90 & 646.80 & 15.01 \\
\cmidrule(lr){2-6}
& \multirow{2}{*}{\makecell[l]{MiniLM-based\\PARS-Cost (INT8)}}
& Avg & 237.12 & 205.28 & 2.73 \\
& & P90 & 608.73 & 568.39 & 3.54 \\
\cmidrule(lr){2-6}
& \multirow{2}{*}{\makecell[l]{DWS (INT8)-SJF\\(Ours)}}
& Avg & 149.71 & 116.44 & 0.58 \\
& & P90 & 605.08 & 556.41 & 1.07 \\

\midrule

\multirow{6}{*}{ShareGPT}
& \multirow{2}{*}{FCFS}
& Avg & 543.78 & 507.98 & 6.02 \\
& & P90 & 974.57 & 936.09 & 11.87 \\
\cmidrule(lr){2-6}
& \multirow{2}{*}{\makecell[l]{MiniLM-based\\PARS-Cost (INT8)}}
& Avg & 449.27 & 412.05 & 3.50 \\
& & P90 & 843.76 & 780.19 & 5.00 \\
\cmidrule(lr){2-6}
& \multirow{2}{*}{\makecell[l]{DWS (INT8)-SJF\\(Ours)}}
& Avg & 304.90 & 270.99 & 0.92 \\
& & P90 & 839.08 & 765.32 & 1.49 \\

\midrule

\multirow{6}{*}{Alpaca}
& \multirow{2}{*}{FCFS}
& Avg & 124.76 & 107.77 & 2.91 \\
& & P90 & 217.80 & 201.10 & 9.11 \\
\cmidrule(lr){2-6}
& \multirow{2}{*}{\makecell[l]{MiniLM-based\\PARS-Cost (INT8)}}
& Avg & 97.30 & 82.55 & 1.43\\
& & P90 & 172.79 & 147.84 & 2.40 \\
\cmidrule(lr){2-6}
& \multirow{2}{*}{\makecell[l]{DWS (INT8)-SJF\\(Ours)}}
& Avg & 61.00 & 44.17 & 0.29 \\
& & P90 & 116.80 & 72.47 & 0.66 \\

\bottomrule
\end{tabular}
}
\end{table}

\stitle{Additional metrics.}
We further report normalized latency (i.e., per-token latency), P90 (i.e., 90th percentile) latency and P90 TTFT at RPS=7, the heaviest load in Section~\ref{subsec:application_evaluation}, and additionally include the Alpaca dataset.

As shown in Table~\ref{tab:additional_online_metrics}, DWS (INT8)-SJF consistently reduces all metrics over FCFS across three datasets.
Compared with the improvement of average latency and TTFT, the gains in P90 E2E latency and TTFT are more modest, since the latency tail under SJF mainly consists of expensive requests that are deprioritized by design.
Nevertheless, these tail metrics are still reduced, suggesting that, together with the aging mechanism, SJF does not improve average latency at the cost of starving expensive requests.

\stitle{MiniLM-based PARS-Cost.}
We additionally construct a MiniLM-based PARS-Cost baseline that replaces its original BERT-base encoder with the same MiniLM encoder as DWS, applies INT8 quantization, and runs on the same single CPU core.
Compared with the original BERT-based PARS-Cost, replacing its BERT encoder (110M backbone) with the smaller MiniLM (33M backbone) generally weakens its scheduling performance, likely because the smaller encoder is \textit{less capable} of faithfully representing prompt semantics, making accurate cost prediction more difficult.
Moreover, DWS (INT8)-SJF still outperforms MiniLM-based PARS-Cost across all metrics across three datasets.
These results show that the gains of DWS do not merely come from a lightweight predictor, but from explicitly modeling and predicting the structured denoising cost.

\begin{table}[t]
\centering
\caption{\textbf{Comparison with Oracle-SJF on SDG tasks.}
Oracle-SJF schedules requests by their actual inference costs measured before, serving as an approximate \textit{upper bound} for SJF scheduling.
All predictors are trained on LMSYS-Chat-1M traces collected with LLaDA2.0-mini.}
\label{tab:sdg_performance_with_oracle}
\resizebox{\columnwidth}{!}{
\begin{NiceTabular}{llcccccc}
\toprule
\multirow{2}{*}[-2ex]{\textbf{Serving Model}} & \multirow{2}{*}[-2ex]{\textbf{Dataset}} & \multicolumn{3}{c}{\textbf{Time to 1K Req. (s)} $\downarrow$} & \multicolumn{3}{c}{\textbf{Completed in 3 min} $\uparrow$} \\
\cmidrule(lr){3-5} \cmidrule(lr){6-8}
& & \textbf{FCFS} & \textbf{\makecell{DWS-SJF\\(CPU) (Ours)}} & \textbf{\makecell{Oracle\\-SJF}} & \textbf{FCFS} & \textbf{\makecell{DWS-SJF\\(CPU) (Ours)}} & \textbf{\makecell{Oracle\\-SJF}} \\
\midrule
\multirow{2}{*}{LLaDA2.0-mini} & LMSYS-Chat-1M & 328 & 120 & 107 & 517 & 1299 & 1373\\
& Alpaca & 197 & 75 & 62 & 912 & 1836 & 1959 \\
\midrule
\multirow{2}{*}{LLaDA2.0-flash} & LMSYS-Chat-1M & 1598 & 592 & 531 & 103 & 310 & 350\\
& Alpaca & 909 & 407 & 365 & 198 & 468 & 509\\
\bottomrule
\end{NiceTabular}
}
\end{table}

\stitle{Comparison with Oracle-SJF.}
We further compare DWS-SJF with an Oracle-SJF scheduler on the offline SDG tasks.
Oracle-SJF follows the same SJF scheduling policy as DWS-SJF, but ranks requests by their \textit{actual} inference costs measured before instead of predicted costs.
Since such costs are only available after execution, Oracle-SJF is not deployable in practice and serves as an approximate \textit{upper bound} for prediction-based SJF scheduling.

As shown in Table~\ref{tab:sdg_performance_with_oracle}, DWS-SJF closely approaches Oracle-SJF across all settings, recovering 83.8\%--94.3\% of the oracle improvement over FCFS.
This holds even on the unseen Alpaca dataset and LLaDA2.0-flash model, although the predictor is trained only on LMSYS-Chat-1M traces collected with LLaDA2.0-mini.
In particular, the recovered fraction on LLaDA2.0-flash (83.8\%--94.3\%) is comparable to that on LLaDA2.0-mini (88.3\%--94.1\%), indicating that the DWS shift within the model family has limited impact on scheduling.

The remaining gap between DWS-SJF and Oracle-SJF stems from both prediction error and prediction latency, as Oracle-SJF knows the exact cost of every request upon arrival.
Hence, the DWS predictor already captures \textit{most} of the scheduling benefit attainable with exact cost knowledge, indicating that its prompt-only cost estimates are sufficiently accurate to guide SJF scheduling.

\begin{table}[t]
\centering
\caption{\textbf{Effect of predictor encoder on DWS prediction.}
All variants use the \textit{same} predictor architecture and training setup, differing only in the encoder, and are evaluated without quantization on a single CPU core.
Values in parentheses are relative to MiniLM-L12 (\textit{Ours});
\textcolor{green!50!black}{green} and \textcolor{red!65!black}{red} denote favorable and unfavorable changes, respectively.}
\label{tab:encoder_effect}
\resizebox{\columnwidth}{!}{%
\begin{NiceTabular}{lccc}
\toprule
\textbf{Encoder}
& \textbf{Backbone Params. (M)} $\downarrow$
& \textbf{Cost MAE (ms)} $\downarrow$
& \textbf{Throughput on CPU (req/s)} $\uparrow$ \\
\midrule

MiniLM-L6
& 23 \textcolor{green!50!black}{($0.70\times$)}
& 452.60 \textcolor{red!65!black}{($1.27\times$)}
& 70.72 \textcolor{green!50!black}{($1.95\times$)} \\

MiniLM-L12 (\textit{Ours})
& 33 \textcolor{gray}{($1.00\times$)}
& 355.24 \textcolor{gray}{($1.00\times$)}
& 36.35 \textcolor{gray}{($1.00\times$)} \\

BERT-base-uncased
& 110 \textcolor{red!65!black}{($3.33\times$)}
& 338.09 \textcolor{green!50!black}{($0.95\times$)}
& 10.85 \textcolor{red!65!black}{($0.30\times$)} \\

Qwen3-0.6B~\citep{qwen3technicalreport}
& 600 \textcolor{red!65!black}{($18.18\times$)}
& 324.71 \textcolor{green!50!black}{($0.91\times$)}
& 2.20 \textcolor{red!65!black}{($0.06\times$)} \\

\bottomrule
\end{NiceTabular}%
}
\end{table}

\section{Effect of Predictor Encoder}
\label{app:encoder_effect}

We further study how the encoder capacity affects DWS prediction accuracy and CPU efficiency.
Our default predictor uses MiniLM-L12 (33M backbone), aiming to retain sufficient prompt representation capacity while remaining lightweight for CPU deployment.
We replace it with MiniLM-L6, BERT-base-uncased, and Qwen3-0.6B, while keeping the remaining predictor architecture and training setup unchanged.

\stitle{Results.}
As shown in Table~\ref{tab:encoder_effect}, increasing the encoder size generally improves DWS prediction accuracy, suggesting that a higher-capacity encoder can better represent the prompt semantics relevant to the subsequent denoising behavior.
Accordingly, replacing MiniLM-L12 with the smaller MiniLM-L6 increases cost MAE by $27\%$, although its CPU throughput improves by $1.95\times$.
Conversely, increasing the backbone from MiniLM-L12 (33M) to BERT-base (110M) and Qwen3-0.6B (600M) further reduces cost MAE by $5\%$ and $9\%$, respectively, consistent with their stronger representation capacity.
However, these accuracy gains come at a substantial efficiency cost: their CPU throughput decreases to only $0.30\times$ and $0.06\times$ that of MiniLM-L12.
These results indicate diminishing returns from scaling the encoder: larger backbones provide increasingly stronger prompt representations and modestly improve prediction accuracy, but incur rapidly growing CPU overhead.
We therefore adopt MiniLM-L12 as the default encoder, which retains most of the accuracy benefit of larger encoders while remaining efficient enough for CPU-side prediction.
\section{Cross-Deployment Experiment}
\label{app:cross_deployment}

DWS explicitly decouples the request-dependent execution structure predicted from the prompt from deployment-specific execution costs.
We therefore examine whether the same DWS predictor can be directly reused under a substantially different deployment by re-profiling only the deployment-specific cost factors.

\stitle{Settings.}
We use LLaDA2.0-mini in both deployments to isolate deployment transfer from model transfer.
As summarized in Table~\ref{tab:cross_deployment_config}, the source and target deployments differ substantially in GPU architecture, tensor-parallel configuration, CPU platform, memory hierarchy, interconnect, and software stack.
In particular, the source deployment uses an RTX PRO 6000 Blackwell GPU with TP=1, whereas the target deployment uses RTX A6000 Ampere GPUs with TP=2.

\begin{table}[t]
\centering
\caption{\textbf{Source and target deployment configurations.}
The two deployments differ substantially in both hardware and system stack.
The same trained DWS predictor is reused \textit{without any retraining or fine-tuning}; only the deployment-specific cost factors are re-profiled on the target deployment.}
\label{tab:cross_deployment_config}
\begin{tabularx}{\columnwidth}{lXX}
\toprule
\textbf{Configuration}
& \textbf{Source deployment}
& \textbf{Target deployment} \\
\midrule
GPU
& RTX PRO 6000
& RTX A6000 \\
GPU architecture
& Blackwell
& Ampere \\
GPU memory
& 96 GiB
& 48 GiB \\
GPU power limit
& 600 W
& 300 W \\
Tensor parallelism
& TP=1
& TP=2 \\
PCIe
& PCIe 5.0 $\times$16
& PCIe 4.0 $\times$16 \\
GPU interconnect
& PCIe
& Paired NVLink + PCIe \\
\midrule
CPU
& $2\times$ AMD EPYC 9534
& $2\times$ Intel Xeon Platinum 8375C \\
Physical CPU cores
& 128
& 64 \\
CPU L3 cache
& 512 MiB
& 108 MiB \\
AVX-512 BF16
& Supported
& Not reported \\
Host memory
& 256 GB
& 512 GB \\
\midrule
OS
& Ubuntu 24.04 LTS
& Ubuntu 22.04.5 LTS \\
NVIDIA driver
& 580.126.20
& 550.120 \\
Linux kernel
& 7.0.0-31
& 6.5.0-25 \\
\bottomrule
\end{tabularx}
\end{table}

Importantly, we reuse the identical trained DWS predictor from the source deployment, with \textit{no retraining or fine-tuning} on the target deployment.
We only re-profile $\alpha$, $\beta$, and $K(\cdot)$ on the target deployment following Appendix~\ref{app:profiling}.

We evaluate both online chatbot serving and offline synthetic data generation on the target deployment.
For online serving, we vary RPS from 2 to 8 on LMSYS-Chat-1M and ShareGPT, and report the heaviest load (RPS=8) in Table~\ref{tab:cross_deployment_results}.
For offline SDG, we follow the main-text setting by submitting 3K requests on LMSYS-Chat-1M and Alpaca, and report the time to complete the first 1K requests and the number completed within 3 minutes.

\begin{table}[t]
\centering
\caption{\textbf{Serving performance after cross-deployment transfer.}
The DWS predictor is transferred unchanged from the source deployment, while only the deployment-specific cost profile is refreshed.
Online results are reported at RPS=8, the heaviest load evaluated on the target deployment.
Offline experiments submit 3K requests following the main-text protocol.}
\label{tab:cross_deployment_results}
\resizebox{\columnwidth}{!}{%
\begin{tabular}{llccc}
\toprule
\textbf{Workload}
& \textbf{Metric}
& \textbf{FCFS}
& \textbf{DWS (INT8)-SJF}
& \textbf{Improvement} \\
\midrule
\multirow{2}{*}{Online / LMSYS-Chat-1M}
& Avg. E2E latency (s) $\downarrow$
& 1350.14 & 827.82 & $1.63\times$ \\
& Avg. TTFT (s) $\downarrow$
& 1288.99 & 767.36 & $1.68\times$ \\
\midrule
\multirow{2}{*}{Online / ShareGPT}
& Avg. E2E latency (s) $\downarrow$
& 1679.76 & 1105.02 & $1.52\times$ \\
& Avg. TTFT (s) $\downarrow$
& 1606.41 & 1033.89 & $1.55\times$ \\
\midrule
\multirow{2}{*}{Offline / LMSYS-Chat-1M}
& Time to 1K Req. (s) $\downarrow$
& 678.26 & 250.73 & $2.71\times$ \\
& Completed in 3 min $\uparrow$
& 229 & 747 & $3.26\times$ \\
\midrule
\multirow{2}{*}{Offline / Alpaca}
& Time to 1K Req. (s) $\downarrow$
& 422.22 & 147.95 & $2.85\times$ \\
& Completed in 3 min $\uparrow$
& 443 & 1147 & $2.59\times$ \\
\bottomrule
\end{tabular}
}
\end{table}

\stitle{Results.}
As shown in Table~\ref{tab:cross_deployment_results}, DWS-SJF consistently improves both online and offline serving performance after transferring to the substantially different target deployment.
These gains persist across different workloads and system loads, despite substantial changes in GPU architecture, tensor parallelism, and CPU platform.
Importantly, the DWS predictor is reused \textit{unchanged} throughout the transfer, and only the deployment-specific cost profile is re-profiled.

These results validate the decoupled design of DWS: the predictor captures request-dependent execution structure that remains reusable across deployments, while hardware-specific execution costs are handled separately through lightweight profiling.
Consequently, adapting DWS to a new deployment does not require collecting new training data or retraining the predictor.

\section{Theoretical Analysis of DWS}
\label{app:theory}

This section provides the formal analysis behind the design of DWS.
Appendix~\ref{app:proxy_bias} quantifies the bias of the two scalar workload proxies and illustrates that of the direct-cost target with an example.
Appendix~\ref{app:theory_proof} proves Proposition~\ref{pro:error}, which bounds the cost-estimation error of any DWS predictor under every deployment.
Appendix~\ref{app:factorized} then shows that the factorized parameterization in Equation~\ref{eq:dws_predictor} is expressively complete.

\stitle{Setup.}
To formalize transfer across hardware deployments, we fix the serving model, block size, and decoding configuration, so that the execution distribution $A \mid x$ is shared by all deployments considered, and deployments differ only in their profiled cost factors $\theta = (\alpha, \beta, K(\cdot))$.
The limitations in Section~\ref{sec:discussion} further allow changing the serving model within the same model family, as evaluated in Section~\ref{subsec:application_evaluation}; we discuss this case in the remarks of Appendix~\ref{app:theory_proof}.

We write $\langle \cdot, \cdot \rangle$ for the Frobenius inner product, $T = \sum_{b,s} A_{b,s}$ for the total number of denoising steps, and $U$ for the matrix that equals one in its first column and zero elsewhere, so that $n = \langle U, A \rangle$.

Since each generated block triggers exactly one KV refresh, $\beta$ can be absorbed into the first column of the cell-cost matrix.
With $\widetilde K_\theta(m)\triangleq K(m)+\beta U$, Equation~\ref{eq:cost_final} becomes $C_\theta=\alpha m+\langle\widetilde K_\theta(m),A\rangle$, which is linear in $A$.
Since $m(x)$ is determined by the prompt and $\mathbb{E}[A\mid x]=D(x)$, the linearity of conditional expectation gives
\begin{equation}
\mathbb{E}[C_\theta\mid x]=\alpha m(x)+\langle\widetilde K_\theta(m(x)),D(x)\rangle,
\label{eq:expected_linear}
\end{equation}
which is Equation~\ref{eq:expected_cost} in matrix form and underlies all analyses below.

\subsection{Bias of Scalar Prediction Targets}
\label{app:proxy_bias}

Section~\ref{sec:intro} shows empirically that output length and total denoising steps are inaccurate proxies for dLLM workload, and Section~\ref{sec:discussion} attributes this to their lossy compression of DWS.
Here we quantify their bias under a given deployment, and show that no fixed calibration constant makes either proxy unbiased.
We then illustrate the bias of the direct-cost target under deployment changes with an example.
For the two proxies, taking the conditional expectation of Equation~\ref{eq:denoising_final} gives $\mathbb{E}[C_{\mathrm{denoising}}\mid x]=\sum_{b,s}K_{b,s}(m(x))\,D_{b,s}(x)$.

\stitle{Total denoising steps.}
A total-step proxy assigns the same cost $\bar c$ to every denoising step, regardless of its position or the prompt length.

\begin{proposition}[Bias of the total-step proxy]
\label{prop:bias_steps}
For any constant $\bar c$ and any prompt $x$,
\begin{equation}
\mathbb{E}[C_{\mathrm{denoising}}\mid x]-\bar c\,\mathbb{E}[T\mid x]
=\sum_{b,s}\big(K_{b,s}(m(x))-\bar c\big)D_{b,s}(x).
\label{eq:bias_steps}
\end{equation}
\end{proposition}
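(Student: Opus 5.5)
The identity in Proposition~\ref{prop:bias_steps} is a direct consequence of linearity of conditional expectation. We compute the two terms on the left-hand side separately and subtract.

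\textbf{Step 1: the denoising term.} Starting from Equation~\ref{eq:denoising_final}, $C_{\mathrm{denoising}}=\sum_{b,s}K_{b,s}(m)A_{b,s}$. The only place the prompt enters the cell costs is through $m=m(x)$, and $m(x)$ is a deterministic function of $x$. Hence each $K_{b,s}(m(x))$ can be pulled out of the conditional expectation. Using $\mathbb{E}[A_{b,s}\mid x]=D_{b,s}(x)$, this gives
\[
\mathbb{E}[C_{\mathrm{denoising}}\mid x]=\sum_{b,s}K_{b,s}(m(x))\,D_{b,s}(x),
\]
which is the identity already noted just before the proposition.

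\textbf{Step 2: the total-step term.} By definition $T=\sum_{b,s}A_{b,s}$, where the sum runs over the fixed index set $\{1,\dots,N_{\max}\}\times\{1,\dots,B\}$. This is a finite sum, so no integrability issues arise. Therefore $\mathbb{E}[T\mid x]=\sum_{b,s}D_{b,s}(x)$, and multiplying by the constant $\bar c$ gives $\bar c\,\mathbb{E}[T\mid x]=\sum_{b,s}\bar c\,D_{b,s}(x)$.

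\textbf{Step 3: combine.} Subtracting the expression from Step~2 from that of Step~1 termwise over the same index set yields $\sum_{b,s}(K_{b,s}(m(x))-\bar c)D_{b,s}(x)$, which is Equation~\ref{eq:bias_steps}.

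There is no real obstacle here. The only point needing care is justifying that $K(m(x))$ is $x$-measurable, so that it factors out of the conditional expectation. This holds because $m$ is read directly from the prompt and $K(\cdot)$ is a fixed, profiled deployment quantity. I would also remark, as the surrounding text intends, that the bias vanishes for every $x$ only if $K_{b,s}(m(x))\equiv\bar c$ on the support of $D(x)$. Consequently no fixed $\bar c$ removes the bias when the cell costs are heterogeneous.
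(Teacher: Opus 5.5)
Your proof is correct and follows the same argument as the paper: take the conditional expectation of Equation~\ref{eq:denoising_final}, pulling out the $x$-measurable $K(m(x))$, then subtract $\bar c\,\mathbb{E}[T\mid x]=\bar c\sum_{b,s}D_{b,s}(x)$. One correction to your closing aside, which the proof does not need: the bias at $x$ vanishes exactly when $\bar c$ equals the $D(x)$-weighted mean cell cost $\mu_T(x)=\sum_{b,s}K_{b,s}(m(x))D_{b,s}(x)/\sum_{b,s}D_{b,s}(x)$, which does not require $K_{b,s}(m(x))\equiv\bar c$ on the support of $D(x)$ (heterogeneous cell costs alone do not rule out a universal $\bar c$), and the real obstruction, as the paper notes, is that $\mu_T(x)$ varies across prompts.
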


\begin{proof}
The claim follows from $\mathbb{E}[T\mid x]=\sum_{b,s}D_{b,s}(x)$.
\end{proof}

Thus, the proxy is unbiased for $x$ if and only if $\bar c$ equals the effective mean step cost $\mu_T(x)\triangleq\sum_{b,s}K_{b,s}(m(x))D_{b,s}(x)/\sum_{b,s}D_{b,s}(x)$.
A single constant $\bar c$ is therefore unbiased for all prompts only if $\mu_T(x)$ is invariant across prompts, which fails in general because both $D(x)$ and $K(m(x))$ vary across requests.

\stitle{Output length.}
The output-length proxy further assumes that every generated block requires the same number of denoising steps $\bar s$, and estimates the denoising cost as $\bar c\bar s\,\widetilde L/B$, where $\widetilde L\triangleq Bn$ is the block-aligned output length.

\begin{proposition}[Bias of the output-length proxy]
\label{prop:bias_length}
For any constants $\bar c$ and $\bar s$ and any prompt $x$,
\begin{equation}
\mathbb{E}[C_{\mathrm{denoising}}\mid x]-\frac{\bar c\bar s}{B}\,\mathbb{E}[\widetilde L\mid x]
=\underbrace{\sum_{b,s}\big(K_{b,s}(m(x))-\bar c\big)D_{b,s}(x)}_{\text{cell-cost heterogeneity}}
+\underbrace{\bar c\,\big(\mathbb{E}[T\mid x]-\bar s\,\mathbb{E}[n\mid x]\big)}_{\text{step-count heterogeneity}}.
\label{eq:bias_length}
\end{equation}
\end{proposition}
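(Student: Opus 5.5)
The identity is an add-and-subtract argument: first peel off the cell-cost heterogeneity term using Proposition~\ref{prop:bias_steps}, then rewrite what remains in terms of step-count heterogeneity. All expectations are conditional on $x$. Only two facts are needed, both linear in $A$: $T=\sum_{b,s}A_{b,s}$ and $n=\langle U,A\rangle=\sum_b A_{b,1}$. Taking conditional expectations gives $\mathbb{E}[T\mid x]=\sum_{b,s}D_{b,s}(x)$ and $\mathbb{E}[n\mid x]=\sum_b D_{b,1}(x)$. Since $\widetilde L=Bn$ and $B$ is a fixed deployment constant, $\mathbb{E}[\widetilde L\mid x]=B\,\mathbb{E}[n\mid x]$. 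The proxy term therefore simplifies to $\frac{\bar c\bar s}{B}\mathbb{E}[\widetilde L\mid x]=\bar c\bar s\,\mathbb{E}[n\mid x]$.

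Next I will insert $\pm\,\bar c\,\mathbb{E}[T\mid x]$ into the left-hand side:
\begin{equation*}
\mathbb{E}[C_{\mathrm{denoising}}\mid x]-\bar c\bar s\,\mathbb{E}[n\mid x]
=\Big(\mathbb{E}[C_{\mathrm{denoising}}\mid x]-\bar c\,\mathbb{E}[T\mid x]\Big)+\bar c\Big(\mathbb{E}[T\mid x]-\bar s\,\mathbb{E}[n\mid x]\Big).
\end{equation*}
The first bracket equals $\sum_{b,s}(K_{b,s}(m(x))-\bar c)D_{b,s}(x)$ by Proposition~\ref{prop:bias_steps}. That proposition in turn rests on $\mathbb{E}[C_{\mathrm{denoising}}\mid x]=\sum_{b,s}K_{b,s}(m(x))D_{b,s}(x)$, which holds because $m(x)$ is determined by the prompt, so $K(m(x))$ can be pulled out of the conditional expectation. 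The second bracket is exactly the step-count heterogeneity term, which completes the identity.

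There is no real obstacle here; the argument is bookkeeping. The only points to state explicitly are that $B$, $\bar c$ and $\bar s$ are constants independent of $x$, and that $K(m(x))$ is $x$-measurable, so both can be moved outside the conditional expectation. I will close by interpreting the two terms. The step-count term vanishes for a given $x$ exactly when $\bar s=\mathbb{E}[T\mid x]/\mathbb{E}[n\mid x]$. This ratio is the expected steps per block and varies across prompts, so, as with Proposition~\ref{prop:bias_steps}, no fixed pair $(\bar c,\bar s)$ makes the proxy unbiased uniformly.
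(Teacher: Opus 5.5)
Your proposal is correct and is the paper's own argument: use $\mathbb{E}[\widetilde L\mid x]/B=\mathbb{E}[n\mid x]$, add and subtract $\bar c\,\mathbb{E}[T\mid x]$, and identify the first bracket via Proposition~\ref{prop:bias_steps}. Your remarks on pulling $K(m(x))$ and the constants out of the conditional expectation just make explicit what the paper's one-line proof leaves implicit.
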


\begin{proof}
The claim follows from $\mathbb{E}[\widetilde L\mid x]/B=\mathbb{E}[n\mid x]$ by adding and subtracting $\bar c\,\mathbb{E}[T\mid x]$.
\end{proof}

The first term is the bias of the total-step proxy, while the second term is specific to output length and vanishes if and only if $\bar s=\mathbb{E}[T\mid x]/\mathbb{E}[n\mid x]$, i.e., the expected number of denoising steps per generated block.
For the actual output length $L=\widetilde L-R$ with $0\le R<B$, the bias gains an additional term $\bar c\bar s\,\mathbb{E}[R\mid x]/B$, which is at most the proxy cost of a single block.

\stitle{A worked example.}
Let $B=4$ and let the cell costs be $K_{b,\cdot}=(5,6,8,10)$\,ms for every block $b$.
Trace $\tau_1$ generates four blocks with one denoising step each, whereas trace $\tau_2$ generates one block with four steps, so both execute $T=4$ steps.
Their denoising costs are $20$\,ms and $29$\,ms, respectively, yet the total-step proxy assigns them the same cost.
The output-length proxy is worse: since $\tau_1$ generates four times as many blocks, it predicts $\tau_1$ to be four times as costly, reversing the true order and thus misguiding SJF scheduling.

\stitle{Direct cost.}
A direct-cost predictor regresses the measured cost under its training deployment $\theta_0=(\alpha_0,\beta_0,K_0(\cdot))$, so its ideal prediction is $\mathbb{E}[C_{\theta_0}\mid x]$.
Reusing this predictor on a new deployment $\theta$ without retraining requires some form of calibration.
As an example, we consider the simplest calibration, which rescales the prediction by a constant $\lambda>0$.

\begin{proposition}[Bias of the direct-cost target under constant rescaling]
\label{prop:bias_direct}
For any deployment $\theta$, any constant $\lambda>0$, and any prompt $x$,
\begin{equation}
\mathbb{E}[C_\theta\mid x]-\lambda\,\mathbb{E}[C_{\theta_0}\mid x]
=(\alpha-\lambda\alpha_0)\,m(x)+\big\langle\widetilde K_\theta(m(x))-\lambda\widetilde K_{\theta_0}(m(x)),\,D(x)\big\rangle,
\label{eq:bias_direct}
\end{equation}
where $\widetilde K_{\theta_0}(m)\triangleq K_0(m)+\beta_0U$.
\end{proposition}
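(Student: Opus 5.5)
The identity follows from writing both conditional expectations in the absorbed form of Equation~\ref{eq:expected_linear} and subtracting them; linearity does the rest. I would proceed in three short steps.

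First, I apply Equation~\ref{eq:expected_linear} to the target deployment $\theta=(\alpha,\beta,K(\cdot))$. This gives $\mathbb{E}[C_\theta\mid x]=\alpha m(x)+\langle\widetilde K_\theta(m(x)),D(x)\rangle$ with $\widetilde K_\theta(m)=K(m)+\beta U$. I then apply it again to the training deployment $\theta_0$, which gives $\mathbb{E}[C_{\theta_0}\mid x]=\alpha_0 m(x)+\langle\widetilde K_{\theta_0}(m(x)),D(x)\rangle$. The key point, fixed in the Setup, is that the execution distribution $A\mid x$, and hence $D(x)$, is the \emph{same} for both deployments, since only the cost factors differ. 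Likewise, $m(x)$ depends only on the prompt, so it is also shared.

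Second, I multiply the $\theta_0$ expression by the scalar $\lambda$ and subtract. The prompt terms combine to $(\alpha-\lambda\alpha_0)m(x)$. By bilinearity of the Frobenius inner product, the surface terms combine to $\langle\widetilde K_\theta(m(x))-\lambda\widetilde K_{\theta_0}(m(x)),D(x)\rangle$. This is exactly Equation~\ref{eq:bias_direct}.

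There is no real obstacle. The only step needing care is justifying that one common $D(x)$ appears in both expectations, which is precisely the fixed-model, fixed-configuration assumption of the Setup. Where useful, I would add a remark interpreting the result: the bias vanishes for all $x$ only if $\alpha=\lambda\alpha_0$ and $\widetilde K_\theta(m)-\lambda\widetilde K_{\theta_0}(m)$ is orthogonal to every attainable $D(x)$. In particular, it vanishes when the new cost factors are an exact rescaling of the old ones, which fails in general.
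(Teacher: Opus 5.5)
Your proof is correct and is the paper's argument: apply Equation~\ref{eq:expected_linear} to $\theta$ and to $\theta_0$, using the common $D(x)$ and $m(x)$ guaranteed by the Setup, then subtract using bilinearity. One caveat on your optional closing remark: ``$\alpha=\lambda\alpha_0$ plus orthogonality'' is sufficient but not necessary, since for a general $K(\cdot)$ the two terms can offset each other (e.g., through the always-active cell with $D_{1,1}(x)=1$), which is why the paper states only the sufficient direction (exact rescaling).
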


\begin{proof}
The claim follows by applying Equation~\ref{eq:expected_linear} to both $\theta$ and $\theta_0$.
\end{proof}

In this example, the bias vanishes if $\theta$ uniformly rescales $\theta_0$, i.e., $\alpha=\lambda\alpha_0$ and $\widetilde K_\theta=\lambda\widetilde K_{\theta_0}$.
Otherwise, the per-cell cost differences are weighted by $D(x)$, which varies across prompts, so no single $\lambda$ removes the bias in general.
In contrast, DWS only requires re-profiling $\theta$ (Appendix~\ref{app:profiling}), after which Equation~\ref{eq:expected_linear} applies directly without retraining the predictor.

\subsection{Proof of the error-control bound}
\label{app:theory_proof}

We first formalize the quantities in Proposition~\ref{pro:error}.
A predictor $\widehat D$ is any measurable function of the prompt taking values in $[0,1]^{N_{\max}\times B}$.
Following Equation~\ref{eq:expected_linear}, its predicted cost under deployment $\theta$ is $\widehat C_\theta(x)=\alpha m(x)+\langle\widetilde K_\theta(m(x)),\widehat D(x)\rangle$, the constant in Proposition~\ref{pro:error} is $\kappa_\theta\triangleq\sup_{m}\|\widetilde K_\theta(m)\|_F$, and its Brier risk is
\begin{equation}
\mathcal{R}(\widehat D)\triangleq\frac{1}{N_{\max}B}\,\mathbb{E}_{x,A}\big\|\widehat D(x)-A\big\|_F^2,
\label{eq:brier_risk}
\end{equation}
whose empirical counterpart is the Stage-B loss in Appendix~\ref{app:predictor_training}.
Next, we show that the excess Brier risk equals the mean squared distance between the predicted and true DWS.

\begin{lemma}[Brier decomposition]
\label{lem:brier}
For any predictor $\widehat D$,
\begin{equation}
\mathcal{R}(\widehat D)-\mathcal{R}(D)=\frac{1}{N_{\max}B}\,\mathbb{E}_x\big\|\widehat D(x)-D(x)\big\|_F^2.
\label{eq:brier_excess}
\end{equation}
Hence, $D$ is the unique minimizer of $\mathcal{R}$ up to almost-sure equality.
\end{lemma}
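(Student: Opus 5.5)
The argument is the standard bias–variance (orthogonality) decomposition of squared loss, carried out cellwise and conditionally on the prompt. I would fix $x$ and write
\[
\widehat D(x)-A=\bigl(\widehat D(x)-D(x)\bigr)+\bigl(D(x)-A\bigr).
\]
Expanding the squared Frobenius norm gives
\[
\|\widehat D(x)-A\|_F^2=\|\widehat D(x)-D(x)\|_F^2+2\langle \widehat D(x)-D(x),\,D(x)-A\rangle+\|D(x)-A\|_F^2 .
\]

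Next I would take the conditional expectation given $x$. Since $\widehat D(x)$ and $D(x)$ are measurable functions of $x$ alone, the cross term becomes
\[
2\bigl\langle \widehat D(x)-D(x),\;D(x)-\mathbb{E}[A\mid x]\bigr\rangle .
\]
This vanishes because $D(x)\triangleq\mathbb{E}[A\mid x]$ by definition. Taking the outer expectation over $x$ (tower property) and dividing by $N_{\max}B$ then yields
\[
\mathcal{R}(\widehat D)=\frac{1}{N_{\max}B}\mathbb{E}_x\|\widehat D(x)-D(x)\|_F^2+\mathcal{R}(D).
\]
This is exactly the claimed identity.

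The only technical point is integrability, which is needed to justify splitting the expectation and applying the tower property. It is immediate here: all entries of $\widehat D$, $D$, and $A$ lie in $[0,1]$, so every term is bounded by $N_{\max}B$ and all expectations are finite. Measurability of $\widehat D$ is assumed in the definition of a predictor, and $D$ is a version of the conditional expectation, so it is measurable as well.

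For uniqueness, the identity shows $\mathcal{R}(\widehat D)\ge\mathcal{R}(D)$, with equality if and only if
\[
\mathbb{E}_x\|\widehat D(x)-D(x)\|_F^2=0,
\]
that is, $\widehat D=D$ almost surely with respect to the prompt distribution. I expect no real obstacle. The main care point is being explicit that the cross term vanishes only after conditioning on $x$, and that this requires $\widehat D$ to depend on $x$ alone and not on $A$.
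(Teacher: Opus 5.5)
Your proposal is correct and follows the paper's proof essentially step for step: you expand around $D(x)$, kill the cross term using $\mathbb{E}[A\mid x]=D(x)$, take the outer expectation over $x$, and read off uniqueness from the nonnegativity of the excess term. Your added remarks on integrability (all entries lie in $[0,1]$) and on $\widehat D$ depending only on $x$ are useful details that the paper leaves implicit.
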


\begin{proof}
Given $x$, expanding $\|\widehat D(x)-A\|_F^2$ around $D(x)$ yields the cross term $2\langle\widehat D(x)-D(x),D(x)-A\rangle$, which vanishes under $\mathbb{E}_{A\mid x}$ since $\mathbb{E}[A\mid x]=D(x)$.
Thus, $\mathbb{E}_{A\mid x}\|\widehat D(x)-A\|_F^2=\|\widehat D(x)-D(x)\|_F^2+\mathbb{E}_{A\mid x}\|D(x)-A\|_F^2$, and taking the expectation over $x$ proves the identity.
Its right-hand side is nonnegative and vanishes if and only if $\widehat D=D$ almost surely.
\end{proof}

\begin{proof}[Proof of Proposition~\ref{pro:error}]
By Equation~\ref{eq:expected_linear} and the Cauchy--Schwarz inequality,
\begin{equation}
\big|\widehat C_\theta(x)-\mathbb{E}[C_\theta\mid x]\big|
=\big|\langle\widetilde K_\theta(m(x)),\widehat D(x)-D(x)\rangle\big|
\le\kappa_\theta\big\|\widehat D(x)-D(x)\big\|_F.
\label{eq:pointwise_bound}
\end{equation}
Taking the expectation over $x$, applying Jensen's inequality $\mathbb{E}_x\|\widehat D(x)-D(x)\|_F\le\big(\mathbb{E}_x\|\widehat D(x)-D(x)\|_F^2\big)^{1/2}$, and invoking Lemma~\ref{lem:brier} yields Equation~\ref{eq:uniform_bound}.
\end{proof}

\stitle{Remarks.}
Proposition~\ref{pro:error} assumes a fixed serving model and decoding configuration, as specified in the Setup.
If the execution distribution changes (e.g., under a different serving model) while $B$ and $N_{\max}$ are kept fixed, the target DWS $D'(x)$ may differ from the source DWS $D(x)$ that $\widehat{D}$ is trained to approximate.
By the triangle inequality, Equation~\ref{eq:uniform_bound} then gains an additional \textit{shift} term $\kappa_\theta\,\mathbb{E}_x\|D'(x)-D(x)\|_F$, where $R$ remains the Brier risk on the source model.
The experimental results in Section~\ref{subsec:application_evaluation} and Appendix~\ref{app:additional_online} show that this shift has limited practical impact within the same model family.
When the predictor trained on LLaDA2.0-mini is reused on LLaDA2.0-flash without retraining, DWS-SJF still recovers most of the Oracle-SJF improvement over FCFS, comparable to that on LLaDA2.0-mini.

\subsection{Expressive Completeness of the Factorized DWS}
\label{app:factorized}

Section~\ref{subsec:prediction_target} shows that the factorized parameterization always satisfies the survival structure.
Here we further show that it loses no expressiveness, i.e., it can represent every DWS that an execution can produce.
Let
\begin{equation}
\mathcal{P}\triangleq\Big\{D\in[0,1]^{N_{\max}\times B}:\;D_{1,1}=1,\;D_{b,s+1}\le D_{b,s},\;D_{b+1,1}\le D_{b,1}\Big\}
\label{eq:valid_dws}
\end{equation}
denote the set of matrices satisfying this structure, which contains every DWS since $A_{1,1}=1$ and both monotonicity constraints hold pointwise for $A$.

\begin{proposition}[Expressive completeness]
\label{prop:factorized_complete}
A matrix $D\in[0,1]^{N_{\max}\times B}$ admits the factorization in Equation~\ref{eq:dws_predictor}, i.e.,
\begin{equation}
D_{b,s}=\Big(\sum_{k=b}^{N_{\max}}p_k^{\mathrm{block}}\Big)\Big(\sum_{t=s}^{B}p_{b,t}^{\mathrm{step}}\Big)
\label{eq:factorized_app}
\end{equation}
for some distribution $p^{\mathrm{block}}$ over $\{1,\dots,N_{\max}\}$ and distributions $p_b^{\mathrm{step}}$ over $\{1,\dots,B\}$, if and only if $D\in\mathcal{P}$.
Moreover, $\mathcal{P}$ is exactly the set of DWS matrices induced by some execution distribution.
\end{proposition}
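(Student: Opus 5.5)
The plan is to prove the two directions of the equivalence directly, by working with tail sums (survival functions), and then to obtain the ``moreover'' clause from an explicit probabilistic construction. Throughout I write $S_b\triangleq\sum_{k=b}^{N_{\max}}p^{\mathrm{block}}_k$ and $G_{b,s}\triangleq\sum_{t=s}^{B}p^{\mathrm{step}}_{b,t}$, so that the factorization reads $D_{b,s}=S_bG_{b,s}$.

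\emph{Factorized $\Rightarrow$ $D\in\mathcal{P}$.} Both $S_b$ and $G_{b,s}$ are tail sums of probability distributions, so they lie in $[0,1]$ and are non-increasing in their index. Moreover $S_1=1$ and $G_{b,1}=1$ for every $b$. These facts give each condition of $\mathcal{P}$ in turn:
\begin{itemize}
\item $D_{1,1}=S_1G_{1,1}=1$;
\item $D_{b,s+1}=S_bG_{b,s+1}\le S_bG_{b,s}=D_{b,s}$, because $S_b\ge 0$;
\item $D_{b+1,1}=S_{b+1}\le S_b=D_{b,1}$;
\item $D\in[0,1]^{N_{\max}\times B}$, since it is a product of numbers in $[0,1]$.
\end{itemize}

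\emph{$D\in\mathcal{P}$ $\Rightarrow$ factorized.} I will invert the survival transforms. For the block distribution, set $p^{\mathrm{block}}_k\triangleq D_{k,1}-D_{k+1,1}$, with the convention $D_{N_{\max}+1,1}\triangleq 0$.
\begin{itemize}
\item The entries are nonnegative: for $k<N_{\max}$ by block monotonicity, and for $k=N_{\max}$ because $D\ge 0$.
\item The sum telescopes to $D_{1,1}=1$.
\item Hence $S_b=D_{b,1}$.
\end{itemize}
For the step distributions, the step I expect to be the only real obstacle is dividing by $D_{b,1}$ when it may vanish, so I split into two cases.
\begin{itemize}
\item If $D_{b,1}>0$, define $G_{b,s}\triangleq D_{b,s}/D_{b,1}$. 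Then $G_{b,1}=1$, $G_{b,s}\in[0,1]$ because $D_{b,s}\le D_{b,1}$ by step monotonicity, and $G_{b,s}$ is non-increasing in $s$. So $p^{\mathrm{step}}_{b,t}\triangleq G_{b,t}-G_{b,t+1}$ (with $G_{b,B+1}\triangleq 0$) is a valid distribution whose tail sums are $G_{b,s}$. The product $S_bG_{b,s}=D_{b,1}\cdot D_{b,s}/D_{b,1}=D_{b,s}$ recovers $D$.
\item If $D_{b,1}=0$, step monotonicity and nonnegativity force $D_{b,s}=0$ for all $s$. Any distribution, e.g.\ a point mass at $t=1$, then works, since $S_b=0$ makes the product zero as required.
\end{itemize}

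\emph{The ``moreover'' clause.} One inclusion has already been noted after the definition of $\mathcal{P}$: every realized $A$ satisfies $A_{1,1}=1$ and both monotonicity constraints pointwise, and these linear constraints are preserved under conditional expectation. For the reverse inclusion, take $D\in\mathcal{P}$ and the distributions just constructed. Define an execution distribution, for instance the same for every prompt $x$, in which
\begin{itemize}
\item $n\sim p^{\mathrm{block}}$ on $\{1,\dots,N_{\max}\}$;
\item each $s_b\sim p^{\mathrm{step}}_b$ on $\{1,\dots,B\}$, independently of $n$ and of each other.
\end{itemize}
This is a legitimate execution: $n\le N_{\max}$, every $s_b\in\{1,\dots,B\}$, and $A_{b,s}=\mathbbm{1}\{b\le n\wedge s\le s_b\}$. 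By independence,
\[
P(A_{b,s}=1)=P(n\ge b)\,P(s_b\ge s)=S_bG_{b,s}=D_{b,s},
\]
which is exactly the chain-rule factorization in Equation~\ref{eq:dws_factorization}. Hence every element of $\mathcal{P}$ is induced by some execution distribution, and $\mathcal{P}$ coincides with the set of attainable DWS matrices.
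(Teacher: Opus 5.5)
Your proof is correct and follows essentially the same route as the paper. The forward direction uses monotonicity of survival functions. The converse takes differences of $q_b=D_{b,1}$ and of the normalized rows $D_{b,s}/D_{b,1}$, with the same case split at $D_{b,1}=0$. Realizability comes from drawing $n$ and the $s_b$ independently. Your extra checks are minor elaborations, not a different argument: the range condition $D\in[0,1]^{N_{\max}\times B}$, and that every induced DWS lies in $\mathcal{P}$ (which the paper only notes when defining $\mathcal{P}$).
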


\begin{proof}
Since survival probabilities are nonincreasing and start at one, every matrix of the form in Equation~\ref{eq:factorized_app} lies in $\mathcal{P}$.
Conversely, given $D\in\mathcal{P}$, let $q_b\triangleq D_{b,1}$ with $q_{N_{\max}+1}\triangleq0$, and define $p_b^{\mathrm{block}}\triangleq q_b-q_{b+1}$, which is a valid distribution with survival probabilities $q_b$.
For every block with $q_b>0$, let $D_{b,B+1}\triangleq0$ and define $p_{b,t}^{\mathrm{step}}\triangleq(D_{b,t}-D_{b,t+1})/q_b$, which is a valid distribution with survival probabilities $D_{b,s}/q_b$; if $q_b=0$, then $D_{b,\cdot}=0$ and $p_b^{\mathrm{step}}$ is arbitrary.
Multiplying the two survival probabilities recovers $D_{b,s}$.
Finally, drawing $n\sim p^{\mathrm{block}}$ and, independently, $s_b\sim p_b^{\mathrm{step}}$ for each block yields an execution with $\mathbb{P}(A_{b,s}=1)=D_{b,s}$, so every $D\in\mathcal{P}$ is realizable.
\end{proof}

Since the expected cost depends on $A$ only through its cell-wise activity probabilities (Equation~\ref{eq:expected_linear}), Proposition~\ref{prop:factorized_complete} implies that the factorization entails no loss for cost estimation, even though it does not model the joint distribution of $A$.
\section{Additional Related Work}
\label{app:more_related_work}

\stitle{Diffusion LLMs and inference frameworks.}
D3PM~\citep{austin2021structured} formulates diffusion models over discrete state spaces, and MDLM~\citep{sahoo2024simple} simplifies its absorbing-state variant into masked diffusion language models, which generate text by iteratively denoising masked tokens.
Building on this paradigm, dLLMs have recently been scaled into competitive language models, such as LLaDA~\citep{nie2026large}, Dream~\citep{ye2025dream7bdiffusionlarge}, and LLaDA2.0~\citep{bie2025llada2}.
To accelerate their inference, several studies introduce dLLM-specific KV-cache mechanisms, such as Fast-dLLM~\citep{wu2026fast} and dKV-Cache~\citep{ma2026dkv}, which are \textit{orthogonal} to DWS.
Meanwhile, SGLang and vLLM have begun to natively support dLLMs, and dInfer~\citep{ma2025dinfer} provides a dedicated inference framework for them.
These advances make accurate request-level cost estimation increasingly important for practical dLLM serving.

\stitle{Cost-aware LLM serving.}
For AR LLMs, \textit{output length} provides a natural proxy for inference cost.
Accordingly, a line of work predicts either output length or its relative ordering to approximate shortest-job-first scheduling and mitigate head-of-line blocking~\citep{ssjf-aiops2024qiu,fu2024efficient,tao2026ranking,wang2026robust}.
Predicted output lengths also guide batching and memory management: \citet{zheng2023response} group requests with similar predicted response lengths into the same micro-batch to reduce padding waste, while S$^3$~\citep{jin2023s} allocates KV-cache memory according to predicted output lengths to enlarge the batch size and improve throughput.

Beyond non-preemptive scheduling, FastServe~\citep{wu2026fastserve} and TRAIL~\citep{dont-stop-shahoutdon} preempt running AR requests at token granularity to prioritize shorter ones.
Block-autoregressive dLLMs naturally admit similar preemption at block boundaries, where no partially denoised state needs to be preserved.
With its block-level fine-grained predictions (Section~\ref{sec:discussion}), DWS can readily estimate the remaining workload of in-flight requests to support such preemption.
This makes DWS-guided preemptive scheduling a promising future work for further improving dLLM serving.

\end{document}